\documentclass{article}

\usepackage[preprint]{neurips_2026}

\usepackage[utf8]{inputenc} 
\usepackage[T1]{fontenc}    
\usepackage{hyperref}       
\usepackage{url}            
\usepackage{booktabs}       
\usepackage{amsfonts}       
\usepackage{nicefrac}       
\usepackage{microtype}      
\usepackage{xcolor}         
\usepackage{amsmath}
\usepackage{amsthm}
\DeclareMathOperator{\vecop}{vec}
\usepackage{graphicx}
\usepackage{subcaption}
\newtheorem{theorem}{Theorem}
\newtheorem{lemma}{Lemma}
\title{Spectral Flattening Is All Muon Needs: How Orthogonalization Controls Learning Rate and Convergence}

\author{%
  Tien-Phat Nguyen\thanks{Equal contribution.} \\
  Hanoi University of \\
  Science and Technology \\
  Hanoi, Vietnam \\
  \texttt{tien.phat140205@gmail.com} \\
  \And
  Truong Nguyen\footnotemark[1] \\
  Hanoi University of \\
  Science and Technology \\
  Hanoi, Vietnam \\
  \texttt{tonytruong23305@gmail.com} \\
  \And
  Minh-Phuc Truong\footnotemark[1] \\
  Hanoi University of \\
  Science and Technology \\
  Hanoi, Vietnam \\
  \texttt{truongminhphuc08102005@gmail.com} \\
  \And
  Tuc Nguyen \\
  Indiana University \\
  107 S. Indiana Ave, Bloomington, \\
  IN 47405, USA \\
  \texttt{tucnguye@iu.edu} \\
  \And
  James Bailey \\
  Monash University \\
  Clayton, VIC 3800, Australia \\
  \texttt{baileyj@unimelb.edu.au} \\
  \And
  Trung Le \\
  Monash University \\
  Clayton, VIC 3800, Australia \\
  \texttt{trunglm@monash.edu} \\
}

\begin{document}

\maketitle

\begin{abstract}
Muon orthogonalizes the momentum buffer before each update, replacing its singular values with ones via Newton--Schulz iterations. This simple change lets Muon tolerate far larger learning rates and converge faster than other optimizers---but \emph{why}? We show that the mechanism is \emph{spectral flattening}, and develop two results around it. First, we prove that Muon's maximal stable step size scales with the \emph{average} singular value of the gradient rather than the \emph{largest}, which bottlenecks standard gradient descent. Second, we recast Muon as a preconditioned gradient method and show, under a Kronecker-factored curvature model, that it improves the effective convergence factor, with the improvement controlled by the spectrum of the gradient covariance. Extensive experiments validate both results: Muon remains stable at learning rates that cause SGD to diverge within the first few iterations, and reaches accuracy milestones several epochs earlier even at identical step sizes. Taken together, our results offer a principled, geometric explanation for Muon's empirical success.
\end{abstract}

\section{Introduction}

Training modern neural networks is fundamentally an optimization problem over a highly nonconvex landscape. As models grow deeper and wider, the loss surface contains many flat regions, sharp directions, saddle points, and local structures that can slow or destabilize training~\citep{liu2022loss}. Consequently, the choice of optimizer is not a minor implementation detail: it often determines whether a model can be trained efficiently at all. This has motivated a long line of practical optimizers, from Adam~\citep{kingma2015adam} to AdamW~\citep{loshchilov2019decoupled}, that improve stability and convergence by changing how gradients are scaled, accumulated, or regularized.

Recently, Muon has emerged as a particularly striking alternative~\citep{jordan2024muon}. Unlike coordinate-wise adaptive methods such as AdamW, Muon treats each weight matrix as a matrix: before applying an update, it uses a few Newton--Schulz iterations to transform the gradient into an approximate polar factor $UV^\top$. This operation preserves the singular vectors of the update while flattening its singular values. Empirically, this matrix-level normalization has made Muon competitive in important deep learning settings. It has been scaled to large language model training with reported gains over AdamW~\citep{liu2025muon}, has accelerated grokking in transformer experiments~\citep{tveit2025grokking}, and has appeared among the fastest optimizers in recent systematic studies of language-model pretraining~\citep{wen2026fantastic}. These applications make Muon an important object of study, not merely a heuristic variant of gradient descent. 
However, the same empirical success also raises a basic theoretical question: \emph{why does Muon work?} A common intuition is that orthogonalization balances the update by removing the influence of very large singular values. This suggests that Muon should be less sensitive to dominant gradient directions, should tolerate larger learning rates, and should move more efficiently through anisotropic curvature. Existing theory has begun to clarify parts of this picture. For example, Muon can be interpreted as steepest descent under a specific matrix norm~\citep{bernstein2024old}, and recent empirical studies have examined when its speedups persist under careful tuning~\citep{wen2026fantastic}. Yet these results do not fully answer the quantitative questions most relevant to optimization: \emph{how much} larger can Muon's learning rate be, and \emph{why} should its orthogonalized update converge faster?

This paper addresses this gap. Our central claim is that Muon's advantage comes from \emph{spectral flattening}: by equalizing the singular values of the update gradient, Muon prevents a single large singular direction from controlling the step-size constraint, while also acting as a one-sided preconditioner for matrix-valued parameters. We develop this claim through three contributions:

1. \textbf{Maximal learning rate.} We derive exact one-step descent thresholds for both gradient descent (SGD) and Muon. The comparison is sharp and quantitative: Muon's maximal stable learning rate is $\frac{2}{\lambda_{\max}^H}\frac{\sum_{i=1}^m \sigma_i}{m}$, governed by the \emph{average} singular value of the gradient, whereas SGD's is $\frac{2}{\lambda_{\max}^H}$, bottlenecked by the \emph{largest} singular value through the Hessian. Under a Gauss--Newton/K-FAC Hessian approximation, the gap is controlled by the ratio $\frac{\sum \sigma_i}{\sigma_{\max}}$, which grows with gradient spectral concentration. This provides a direct, mechanistic explanation for why Muon tolerates substantially larger step sizes: spectral flattening prevents any single singular direction from dominating the descent condition.

2. \textbf{Convergence rate.} We recast Muon as a preconditioned gradient method with preconditioner $\boldsymbol{P} = \mathbb{I}_n \otimes (GG^\top)^{-1/2}$ and analyze it under relative smoothness and Polyak--\L{}ojasiewicz conditions. This reveals a structural acceleration: GD converges with factor $1-\alpha/\beta$, while Muon converges with the improved factor $1-\tilde{\alpha}/\tilde{\beta}$, where $\tilde{\alpha} = \lambda_{\min}(\boldsymbol{P}H)$ and $\tilde{\beta} = \lambda_{\max}(\boldsymbol{P}H)$. Under a Kronecker-factored curvature model, we prove that $\frac{\tilde{\alpha}}{\tilde{\beta}} = \frac{\alpha}{\beta}\big/\sqrt{\frac{\lambda_{\min}(G_t G_t^\top)}{\lambda_{\max}(G_t G_t^\top)}} > \frac{\alpha}{\beta}$ whenever the gradient covariance is anisotropic, yielding a strictly faster linear rate. The improvement is directly tied to the spectral spread of the gradient: the more ill-conditioned the gradient covariance, the larger the gap.

3. \textbf{Experimental validation.} We validate both results on CIFAR-10 with CifarNet using a dual-optimizer strategy that isolates each optimizer's effect on convolutional layers. Learning rate sweeps confirm that Muon remains stable where SGD diverges, and convergence experiments at identical learning rates show Muon reaching accuracy milestones earlier with a consistently lower per-step convergence ratio $r_t$. Our analysis also yields a new perspective on normalization: because the maximal stable learning rate scales inversely with $\lambda_{\max}(\boldsymbol{X}^\top\boldsymbol{X})$, we propose and verify a normalization principle that expands the usable learning rate range for both optimizers.

\section{Related Work}
\textbf{Optimizers for deep learning.}
The practical success of deep learning is closely tied to stable and efficient first-order optimization. Adam~\citep{kingma2015adam} and AdamW~\citep{loshchilov2019decoupled} improve robustness through momentum, coordinate-wise normalization, and decoupled regularization, although their benefits can depend strongly on task and tuning protocol~\citep{wilson2017marginal}. Our work follows the broader effort to understand optimizer-induced training dynamics, but focuses on a matrix-valued method whose normalization acts on singular directions rather than coordinates.

\textbf{Understanding Muon.}
Muon was introduced as a first-order optimizer that approximates the polar factor of each gradient matrix using Newton--Schulz iterations~\citep{jordan2024muon}. Recent empirical work has shown that Muon can scale to language-model pretraining and related settings~\citep{liu2025muon,shah2025practical}, with reported applications to grokking, latent-attention and MoE transformers, finetuning, and quantized optimizer states~\citep{tveit2025grokking,mehta2025muon,page2025muonall,gupta2025quantizing}. Systematic benchmarking further places Muon among the fastest matrix-based optimizers, while showing that its gains depend on scale and tuning protocol~\citep{wen2026fantastic}. This empirical progress has also motivated variants such as blockwise Muon, AdaMuon, MuonMax, and NAMO~\citep{boreiko2025towards,si2025adamuon,crawshaw2025exploration,zhang2026adamimproves}.

Theoretical work has started to explain Muon's geometry. Existing analyses connect Muon to steepest descent under matrix norms~\citep{bernstein2024old,li2025note}, non-Euclidean trust-region optimization~\citep{kovalev2025understanding}, spectral-norm constrained optimization~\citep{chen2025spectralconstraints}, and conditions under which spectral updates outperform Euclidean ones~\citep{davis2025spectralupdates}. These works clarify important aspects of the update, but they do not directly quantify how orthogonalization changes the maximal stable learning rate or the effective convergence factor. Our paper fills this gap by showing that spectral flattening changes the learning-rate bottleneck from the largest singular value to an average singular scale, and improves the preconditioned convergence factor under a Kronecker-factored curvature model.

\section{Preliminaries}
\subsection{Muon Optimizer}
Muon~\citep{jordan2024muon} is a matrix-valued optimizer. At iteration $t$, given the current weight $\boldsymbol{W}_{t-1}$, we update as follows:
\begin{align}
\boldsymbol{G}_{t} & =\nabla\mathcal{L}\left(\boldsymbol{W}_{t-1}\right)\nonumber \\
\boldsymbol{O}_{t} & =\text{Newton-Schulz}\left(\boldsymbol{G}_t\right)\nonumber \\
\boldsymbol{W}_{t} & =\boldsymbol{W}_{t-1}-\eta\boldsymbol{O}_t,\label{eq:Muon}
\end{align}
where $\eta$ is a learning rate. Here we note that Newton-Schulz is used to iteratively approximate $\left(\boldsymbol{G}_{t}\boldsymbol{G}_{t}^{\top}\right)^{-1/2}\boldsymbol{G}_{t}$. Specifically, let $\boldsymbol{G}_{t}=U\Sigma V^\top$ be the SVD of $\boldsymbol{G}_t$, we then have $\left(\boldsymbol{G}_{t}\boldsymbol{G}_{t}^{\top}\right)^{-1/2}\boldsymbol{G}_{t}=UV^\top$. Geometrically, $UV^\top$ is the closest orthogonal matrix to $\boldsymbol{G}_t$ in Frobenius norm, effectively replacing the singular values with ones while preserving the singular directions. 

\subsection{Stochastic Gradient Descent (SGD) Optimizer}
At iteration $t$, given the current weight $\boldsymbol{W}_{t-1}$, we update as follows:
\begin{align}
\boldsymbol{G}_{t} & =\nabla\mathcal{L}\left(\boldsymbol{W}_{t-1}\right)\nonumber \\
\boldsymbol{W}_{t} & =\boldsymbol{W}_{t-1}-\eta\boldsymbol{G}_t,\label{eq:GD}
\end{align}
where $\eta$ is a learning rate.

\subsection{K-FAC Hessian Approximation}\label{sec:kfac}
Our learning-rate comparison uses a local Gauss--Newton/K-FAC approximation of the layerwise Hessian~\citep{martens2015optimizing}. For a weight matrix with input activations $\boldsymbol{X}$ and gradient matrix $G_t$, this approximation factorizes the curvature as
\begin{equation}
H \approx \boldsymbol{X}^{\top}\boldsymbol{X}\otimes G_tG_t^\top.
\end{equation}
This Kronecker structure lets us express the dominant curvature scale through the largest eigenvalues of the input covariance and gradient covariance, which is the form used in Section~\ref{sec:section_2}.



\section{Theoretical Analysis}\label{sec:theory}
\subsection{Analysis on Learning Rate}
\label{sec:section_2}

First, we compare the maximal learning rate of SGD and Muon. By a second-order Taylor expansion, we have the quadratic upper bound
\begin{equation}
\mathcal{L}\left(\boldsymbol{W}_{t-1}+\Delta\boldsymbol{W}\right)\leq\mathcal{L}\left(\boldsymbol{W}_{t-1}\right)+\nabla\mathcal{L}\left(\boldsymbol{W}_{t-1}\right)^{\top}\Delta\boldsymbol{W}+\frac{\lambda_{\text{max}}^{H}}{2}\Vert\Delta\boldsymbol{W}\Vert_{2}^{2}.\label{eq:Taylor_expansion}
\end{equation}
where $\lambda_{\text{max}}^{H}:=\lambda_{\text{max}}(H)$ denotes the largest eigenvalue of the Hessian. Detailed derivation could be found in Appendix~\ref{app:taylor}

\subsubsection{Theoretical Analysis for Stochastic Gradient Descent} 

For SGD optimizer, we have $\Delta\boldsymbol{W}=-\eta\boldsymbol{G}_{t}$, leading to
\begin{equation}
\mathcal{L}\left(\boldsymbol{W}_{t-1}+\Delta\boldsymbol{W}\right)\leq\mathcal{L}\left(\boldsymbol{W}_{t-1}\right)-\eta\nabla\mathcal{L}\left(\boldsymbol{W}_{t-1}\right)^{\top}\boldsymbol{G}_{t}+\frac{\lambda_{\text{max}}^{H}}{2}\eta^{2}\Vert\boldsymbol{G}_{t}\Vert_{2}^{2}.\label{eq:gd_bound}
\end{equation}
The update is efficient (i.e., $\mathcal{L}\left(\boldsymbol{W}_{t}\right)<\mathcal{L}\left(\boldsymbol{W}_{t-1}\right)$) if we have
\begin{equation}
\eta_{t}<\eta_{t}^{\max}:=\frac{2}{\lambda_{\text{max}}^{H}}\frac{\nabla\mathcal{L}\left(\boldsymbol{W}_{t-1}\right)^{\top}\boldsymbol{G}_{t}}{\Vert\boldsymbol{G}_{t}\Vert_{2}^{2}}.\label{eq:gd_eta_max}
\end{equation}
\begin{theorem}
    \label{thm:gd_lr}
    For gradient descent with $\boldsymbol{G}_{t}=\nabla\mathcal{L}(\boldsymbol{W}_{t-1})$, the maximal learning rate in~\eqref{eq:gd_eta_max} reduces to $\eta_{t}^{\mathrm{max}}=\frac{2}{\lambda_{\mathrm{max}}^{H}}$.


\end{theorem}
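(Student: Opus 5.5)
The plan is to substitute $\boldsymbol{G}_t=\nabla\mathcal{L}(\boldsymbol{W}_{t-1})$ directly into the threshold \eqref{eq:gd_eta_max} and simplify the ratio. First I will fix the conventions behind the quadratic upper bound \eqref{eq:Taylor_expansion}. The matrix $\Delta\boldsymbol{W}$ is identified with its vectorization $\vecop(\Delta\boldsymbol{W})$, so that $H$ is the Hessian with respect to $\vecop(\boldsymbol{W})$. Under this identification the pairing $\nabla\mathcal{L}^{\top}\Delta\boldsymbol{W}$ is the Frobenius inner product $\langle \nabla\mathcal{L},\Delta\boldsymbol{W}\rangle_F=\mathrm{tr}(\nabla\mathcal{L}^\top\Delta\boldsymbol{W})$. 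Likewise, $\Vert\cdot\Vert_2$ is the Euclidean norm of the vectorized matrix, i.e.\ the Frobenius norm. This is the only reading under which the bound $\vecop(\Delta\boldsymbol{W})^\top H\,\vecop(\Delta\boldsymbol{W})\le\lambda_{\max}^H\Vert\vecop(\Delta\boldsymbol{W})\Vert^2$ used in the Taylor derivation (Appendix~\ref{app:taylor}) holds, so I will state it explicitly before computing.

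With these conventions, the numerator of \eqref{eq:gd_eta_max} becomes
\[
\nabla\mathcal{L}(\boldsymbol{W}_{t-1})^\top\boldsymbol{G}_t=\langle\boldsymbol{G}_t,\boldsymbol{G}_t\rangle_F=\Vert\boldsymbol{G}_t\Vert_F^2,
\]
and the denominator is the same quantity $\Vert\boldsymbol{G}_t\Vert_F^2$. Assuming $\boldsymbol{G}_t\neq 0$ (otherwise we are at a stationary point and there is nothing to prove), the ratio equals $1$, and hence $\eta_t^{\max}=2/\lambda_{\max}^H$. For completeness, I will also rederive the threshold from \eqref{eq:gd_bound}. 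The right-hand side is strictly below $\mathcal{L}(\boldsymbol{W}_{t-1})$ iff $-\eta\Vert\boldsymbol{G}_t\Vert_F^2+\tfrac{\lambda^H_{\max}}{2}\eta^2\Vert\boldsymbol{G}_t\Vert_F^2<0$. For $\eta>0$ this is equivalent to $\eta<2/\lambda_{\max}^H$, which confirms the formula; here I assume $\lambda_{\max}^H>0$.

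The main obstacle is the notational point about which norm appears in the denominator, not any computation. If $\Vert\cdot\Vert_2$ were the spectral norm, the denominator would be $\sigma_{\max}(\boldsymbol{G}_t)^2$ and the ratio would be $\sum_i\sigma_i^2/\sigma_{\max}^2$ rather than $1$. So the proof should commit to the vectorized (Frobenius) interpretation and justify it from the Rayleigh-quotient step in the Taylor bound. That same convention is what later makes the Muon threshold come out as $\frac{2}{\lambda^H_{\max}}\cdot\frac{\sum_i\sigma_i}{m}$: there the numerator is $\langle \boldsymbol{G}_t,UV^\top\rangle_F=\sum_i\sigma_i$ and the denominator is $\Vert UV^\top\Vert_F^2=m$.
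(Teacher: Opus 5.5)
Your proposal is correct and follows the paper's proof: substituting $\boldsymbol{G}_t=\nabla\mathcal{L}(\boldsymbol{W}_{t-1})$ makes the numerator of \eqref{eq:gd_eta_max} equal to $\Vert\boldsymbol{G}_t\Vert^2$, so the ratio is $1$ and $\eta_t^{\max}=2/\lambda_{\max}^H$. Stating the vectorized (Frobenius) norm convention and the $\boldsymbol{G}_t\neq 0$ caveat are sensible additions that the paper leaves implicit; they are consistent with its use of $\Vert\boldsymbol{O}_t\Vert_F^2$ in the Muon bound.
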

The proof is deferred to Appendix~\ref{sec:proof_gd_lr}.

\subsubsection{Theoretical Analysis for Muon} 
\label{sec:lr_muon}

For the Muon optimizer, we have $\Delta\boldsymbol{W}=-\eta\boldsymbol{O}_{t}$. Applying the quadratic bound~\eqref{eq:Taylor_expansion} and simplifying yields
\begin{equation}
\mathcal{L}\left(\boldsymbol{W}_{t-1}+\Delta\boldsymbol{W}\right)\leq\mathcal{L}\left(\boldsymbol{W}_{t-1}\right)-\eta\,\text{tr}\left(G_{t}^{\top}\boldsymbol{O}_{t}\right)+\frac{\lambda_{\text{max}}^{H}}{2}\eta^{2}m,\label{eq:gap_Muon}
\end{equation}
where $m$ is the number of rows of $\boldsymbol{O}_t \in \mathbb{R}^{m \times n}$, assuming $m \leq n$. A formal derivation could be found in Appendix~\ref{app:muon_bound}

The update is efficient (i.e., $\mathcal{L}\left(\boldsymbol{W}_{t}\right)<\mathcal{L}\left(\boldsymbol{W}_{t-1}\right)$) if we have
\begin{equation}
\eta_{t}\leq\eta_{t}^{\text{max}}:=\frac{2}{\lambda_{\text{max}}^{H}}\frac{\text{tr}\left(G_{t}^{\top}\boldsymbol{O}_{t}\right)}{m}.\label{eq:eta_t_max_Muon}
\end{equation}

\begin{theorem}
    \label{thm:eta_Muon}
    For the Muon optimizer, we then have $\eta_{t}^{\text{max}}=\frac{2}{\lambda_{\text{max}}^{H}}\frac{\sum_{i=1}^{m}\sigma_{i}}{m}$, where $\sigma_{1:m}$ are the singular values of the gradient matrix $G_t$.


\end{theorem}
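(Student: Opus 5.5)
The plan is to evaluate the numerator $\text{tr}(G_t^\top \boldsymbol{O}_t)$ in~\eqref{eq:eta_t_max_Muon} directly from the SVD of $G_t$; the denominator $m$ is already fixed by the bound~\eqref{eq:gap_Muon}. Take $m\le n$ as in Section~\ref{sec:lr_muon}. Write the reduced SVD $G_t = U\Sigma V^\top$ with $U\in\mathbb{R}^{m\times m}$ orthogonal, $V\in\mathbb{R}^{n\times m}$ having orthonormal columns, and $\Sigma=\mathrm{diag}(\sigma_1,\dots,\sigma_m)$. Following the Preliminaries, identify the Newton--Schulz output with the exact polar factor $\boldsymbol{O}_t=(G_tG_t^\top)^{-1/2}G_t = UV^\top$. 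This identification treats Newton--Schulz as exact, which the paper's convention already does.

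The main computation uses cyclicity of the trace:
\[
\text{tr}(G_t^\top \boldsymbol{O}_t)=\text{tr}(V\Sigma U^\top U V^\top)=\text{tr}(V\Sigma V^\top)=\text{tr}(\Sigma V^\top V)=\text{tr}(\Sigma)=\sum_{i=1}^m\sigma_i .
\]
Substituting this into~\eqref{eq:eta_t_max_Muon} gives $\eta_t^{\max}=\frac{2}{\lambda_{\max}^H}\frac{\sum_{i=1}^m\sigma_i}{m}$. For completeness, one can also note that $\|\boldsymbol{O}_t\|_F^2=\text{tr}(UV^\top VU^\top)=\text{tr}(I_m)=m$, which is where the factor $m$ in~\eqref{eq:gap_Muon} comes from.

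The only delicate step is the rank-deficient case. If $G_t$ has rank $r<m$, then $(G_tG_t^\top)^{-1/2}$ does not exist. I would handle this with the pseudo-inverse, which gives $\boldsymbol{O}_t=U_rV_r^\top$ and $\text{tr}(G_t^\top\boldsymbol{O}_t)=\sum_{i\le r}\sigma_i=\sum_{i\le m}\sigma_i$, since $\sigma_i=0$ for $i>r$. The norm $\|\boldsymbol{O}_t\|_F^2$ then equals $r\le m$, so $m$ is an upper bound. In that case the stated $\eta_t^{\max}$ is a valid, possibly conservative, threshold. Alternatively, one can complete $U,V$ arbitrarily so that $\boldsymbol{O}_t$ is a full partial isometry with $\|\boldsymbol{O}_t\|_F^2=m$ exactly; the trace is unchanged because the extra terms are multiplied by zero singular values. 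I would state the full-rank case as the main argument and record the rank-deficient case as this remark.
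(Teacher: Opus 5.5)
Your proposal is correct and matches the paper's proof essentially step for step. Like the paper, you identify $\boldsymbol{O}_t$ with the exact polar factor $UV^\top$ from the reduced SVD, use cyclicity of the trace to get $\mathrm{tr}(G_t^\top\boldsymbol{O}_t)=\mathrm{tr}(\Sigma)=\sum_{i=1}^m\sigma_i$, and substitute this into the threshold; your check that $\|\boldsymbol{O}_t\|_F^2=\mathrm{tr}(I_m)=m$ and your rank-deficient remark are extras the paper does not address, and they do not change the argument.
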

The proof is deferred to Appendix~\ref{sec:proof_eta_muon}.
To make the maximal learning rate tractable, we adopt the Gauss-Newton/K-FAC approximation for the Hessian (Section~\ref{sec:kfac})
\begin{equation}
H=\boldsymbol{X}^{\top}\boldsymbol{X} \otimes G_{t}G_{t}^\top,\label{eq:Hessian_approx}
\end{equation}
where $\boldsymbol{X} \in \mathbb{R}^{b \times d}$ is the data to this layer and $\otimes$ represents the Kronecker product.

We estimate the maximal eigen-value of the Hessian matrix as
\begin{equation}
\lambda_{\text{max}}^{H}=\lambda_{\text{max}}(\boldsymbol{X}^{\top}\boldsymbol{X})\lambda_{\text{max}}(G_{t}G_{t}^\top)=\lambda_{\text{max}}(\boldsymbol{X}^{\top}\boldsymbol{X})\sigma_{\text{max}}(G_t)^2.\label{eq:eigen_max_estimate}
\end{equation}

We now compare the $\eta_t^{\text{max}}$ for two optimizers:
\begin{itemize}
    \item For SGD, we can approximate $\eta_{t}^{\text{max}}=\frac{2}{\lambda_{\text{max}}^{H}}\approx\frac{2}{\lambda_{\text{max}}(\boldsymbol{X}^{\top}\boldsymbol{X})\sigma_{\text{max}}(G_t)}\cdot\frac{1}{\sigma_{\text{max}}(G_t)}$. This further implies that GD needs to set a very small learning rate if $\sigma_{\text{max}}(G)$ is high.
    \item For Muon, we can approximate $\eta_{t}^{\text{max}}=\frac{2}{\lambda_{\text{max}}^{H}}\frac{\sum_{i=1}^{m}\sigma_{i}}{m}\approx\frac{2}{\lambda_{\text{max}}(\boldsymbol{X}^{\top}\boldsymbol{X})\sigma_{\text{max}}(G_t)}\cdot\frac{\sum_{i=1}^{m}\sigma_{i}}{m\sigma_{\text{max}}(G_t)}$. Compared to GD, Muon haves a second factor $\frac{\sum_{i=1}^{m}\sigma_{i}}{m\sigma_{\text{max}}(G_t)}$, which compensates for a large $\sigma_{\text{max}}(G_t)$ by averaging over all singular values---a direct consequence of spectral flattening.
\end{itemize}

Experimental validation of these learning-rate bounds is provided in Section~\ref{sec:exp_lr}; the role of $\lambda_{\text{max}}(\boldsymbol{X}^{\top}\boldsymbol{X})$ is further examined through normalization layers in Section~\ref{sec:normalization}.


\subsection{Analysis on Convergence Rate}

\subsubsection{Theoretical Analysis for Convergence Rate of Stochastic Gradient Descent}

We now investigate the convergence rate of SGD. Similar to other works, we make the following assumptions:
\begin{itemize}
    \item \textbf{A1:} The loss function is $\beta$-smooth with $\beta>0$ (e.g., $\beta$ could be set to $\lambda_{\text{max}}^H$).
    \item \textbf{A2:} The loss function satisfies the PL condition:
    \begin{equation}
\frac{1}{2}\Vert\nabla\mathcal{L}(\boldsymbol{W})\Vert_{2}^2\geq\alpha\left(\mathcal{L}(\boldsymbol{W})-\mathcal{L}^{*}\right),\label{eq:PL_cond}
\end{equation}
where $\mathcal{L}^*$ is the minimal objective value and $\alpha >0$ (e.g., $\alpha$ could be set to $\lambda_{\text{min}}^H$). 
\end{itemize}

\begin{theorem}
    \label{thm:GD_converenge_rate}
    Assuming the above assumptions and setting the learning rate $\eta_t = \frac{1}{\beta}$, the convergence rate of GD is
    \begin{equation}
\mathcal{L}\left(\boldsymbol{W}_{t}\right)-\mathcal{L}^{*}\leq\left(1-\frac{\alpha}{\beta}\right)^{t}\left(\mathcal{L}\left(\boldsymbol{W}_{0}\right)-\mathcal{L}^{*}\right).\label{eq:conv_rate_GD}
\end{equation}
\end{theorem}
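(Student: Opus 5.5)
The plan is the standard one-step descent argument followed by induction. First I will apply the quadratic upper bound implied by A1. Since $\mathcal{L}$ is $\beta$-smooth, for any $\Delta\boldsymbol{W}$ we have $\mathcal{L}(\boldsymbol{W}_{t-1}+\Delta\boldsymbol{W})\le\mathcal{L}(\boldsymbol{W}_{t-1})+\langle\nabla\mathcal{L}(\boldsymbol{W}_{t-1}),\Delta\boldsymbol{W}\rangle+\frac{\beta}{2}\Vert\Delta\boldsymbol{W}\Vert_2^2$. This is the same inequality as~\eqref{eq:Taylor_expansion} with $\lambda_{\max}^H$ replaced by $\beta$, and here $\Vert\cdot\Vert_2$ is the Frobenius/vectorized norm. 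Plugging in the GD step $\Delta\boldsymbol{W}=-\eta\boldsymbol{G}_t$ with $\boldsymbol{G}_t=\nabla\mathcal{L}(\boldsymbol{W}_{t-1})$ gives
\begin{equation*}
\mathcal{L}(\boldsymbol{W}_t)\le\mathcal{L}(\boldsymbol{W}_{t-1})-\eta\Big(1-\frac{\beta\eta}{2}\Big)\Vert\nabla\mathcal{L}(\boldsymbol{W}_{t-1})\Vert_2^2 .
\end{equation*}
With $\eta=1/\beta$ the coefficient is $\frac{1}{2\beta}$, so we obtain the sufficient-decrease inequality $\mathcal{L}(\boldsymbol{W}_t)\le\mathcal{L}(\boldsymbol{W}_{t-1})-\frac{1}{2\beta}\Vert\nabla\mathcal{L}(\boldsymbol{W}_{t-1})\Vert_2^2$. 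This choice of step lies within the stability range of Theorem~\ref{thm:gd_lr} when $\beta=\lambda_{\max}^H$.

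Next I will invoke the PL condition~\eqref{eq:PL_cond} at $\boldsymbol{W}_{t-1}$, namely $\Vert\nabla\mathcal{L}(\boldsymbol{W}_{t-1})\Vert_2^2\ge2\alpha(\mathcal{L}(\boldsymbol{W}_{t-1})-\mathcal{L}^*)$. Substituting this into the decrease inequality and subtracting $\mathcal{L}^*$ from both sides yields the one-step contraction
\begin{equation*}
\mathcal{L}(\boldsymbol{W}_t)-\mathcal{L}^*\le\Big(1-\frac{\alpha}{\beta}\Big)\big(\mathcal{L}(\boldsymbol{W}_{t-1})-\mathcal{L}^*\big).
\end{equation*}
Unrolling this over $t$ steps by induction gives~\eqref{eq:conv_rate_GD}.

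Two minor points need care, and neither is a real obstacle.

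\textbf{Meaningful contraction factor.} We need $0\le1-\alpha/\beta<1$. This follows by combining smoothness and PL: smoothness implies $\mathcal{L}^*\le\mathcal{L}(\boldsymbol{W})-\frac{1}{2\beta}\Vert\nabla\mathcal{L}(\boldsymbol{W})\Vert_2^2$, and together with PL this forces $\alpha\le\beta$ whenever $\mathcal{L}(\boldsymbol{W})>\mathcal{L}^*$.

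\textbf{Form of the smoothness bound.} We must make sure the quadratic upper bound holds globally along the segment, as A1 guarantees, rather than only locally through the Hessian at $\boldsymbol{W}_{t-1}$.
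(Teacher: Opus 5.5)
Your proposal is correct and matches the paper's proof step for step: the $\beta$-smoothness upper bound with $\Delta\boldsymbol{W}=-\eta\boldsymbol{g}_t$, the choice $\eta=1/\beta$ giving a decrease of $\frac{1}{2\beta}\Vert\boldsymbol{g}_t\Vert_2^2$, and then the PL condition for the one-step contraction. Your explicit unrolling by induction and your check that $0\le 1-\alpha/\beta<1$ supply details the paper leaves implicit.
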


It is evident that the convergence rate of SGD depends on the ratio $\frac{\alpha}{\beta}=\frac{\lambda_{\text{min}}^{H}}{\lambda_{\text{max}}^{H}}$. Higher this ratio is, faster convergence is. The proof is deferred to Appendix~\ref{sec:proof_gd_conv}.

\subsubsection{Theoretical Analysis for Convergence Rate of Muon}

We now rewrite the Muon update in the vectorial form as follows (see Appendix \ref{sec:vector_form} for a detailed derivation):
\[
\boldsymbol{W}_{t}=\boldsymbol{W}_{t-1}-\eta\boldsymbol{P}_{t}\boldsymbol{g}_{t},
\]
where $\boldsymbol{W}_{t-1} \in \mathbb{R}^d$ with $d= m \times n$ is the vector form of $\boldsymbol{W}_{t-1}\in \mathbb{R}^{m \times n}$, $\boldsymbol{P}_{t} = \mathbb{I}_n \otimes (G_{t}G_{t}^\top)^{-\frac{1}{2}} \in \mathbb{R}^{d \times d}$, and $\boldsymbol{g}_{t} = \text{vec}(G_{t})$.    

\begin{lemma}
    \label{lem:smooth}
    With $\boldsymbol{P}=\mathbb{I}_{n}\otimes(GG^{\top})^{-\frac{1}{2}}\in\mathbb{R}^{d\times d}$ where $G = \nabla \mathcal{L}(\boldsymbol{W})$, we have the following inequalities:

    (i) With $\tilde{\beta} = \lambda_{\text{max}}(\boldsymbol{P}H)$, we have
    \begin{equation}
\mathcal{L}(\boldsymbol{W}')\leq\mathcal{L}(\boldsymbol{W})+\nabla\mathcal{L}(\boldsymbol{W})^{\top}(\boldsymbol{W}'-\boldsymbol{W})+\frac{\tilde{\beta}}{2}\Vert\boldsymbol{W}'-\boldsymbol{W}\Vert_{\boldsymbol{P}^{-1}}^{2},\label{eq:relative_smooth}
\end{equation}
where $\Vert\boldsymbol{U}\Vert_A = \boldsymbol{U}^\top A\boldsymbol{U}$.

(ii) With $\tilde{\alpha} = \lambda_{\text{min}}(\boldsymbol{P}H)$, we have
\begin{equation}
\frac{1}{2}\Vert\nabla\mathcal{L}(\boldsymbol{W})\Vert_{\boldsymbol{P}}^{2}\geq\tilde{\alpha}\left(\mathcal{L}(\boldsymbol{W})-\mathcal{L}^{*}\right).\label{eq:relative_PL}
\end{equation}
\end{lemma}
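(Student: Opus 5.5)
Both parts reduce to a quadratic-model statement once we work in the paper's setting, where the curvature is the fixed matrix $H$ (the Hessian, or its K-FAC surrogate). Throughout, I would treat $\boldsymbol{P}$ as a fixed symmetric positive definite matrix. This needs $GG^\top$ to be invertible, i.e.\ $G$ of full row rank $m$. I would also use that $\boldsymbol{P}H$ is similar to the symmetric matrix $\boldsymbol{P}^{1/2}H\boldsymbol{P}^{1/2}$. Hence its eigenvalues are real, and $\tilde{\alpha},\tilde{\beta}$ are the extreme Rayleigh quotients
\[
\tilde{\beta}=\max_{\boldsymbol{u}\neq 0}\frac{\boldsymbol{u}^\top H\boldsymbol{u}}{\boldsymbol{u}^\top\boldsymbol{P}^{-1}\boldsymbol{u}},\qquad
\tilde{\alpha}=\min_{\boldsymbol{u}\neq 0}\frac{\boldsymbol{u}^\top H\boldsymbol{u}}{\boldsymbol{u}^\top\boldsymbol{P}^{-1}\boldsymbol{u}}.
\]
Both follow from the substitution $\boldsymbol{u}=\boldsymbol{P}^{1/2}\boldsymbol{v}$. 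This generalized Rayleigh-quotient characterization is the single tool used in both parts.

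For (i), I would follow the route of the Taylor bound \eqref{eq:Taylor_expansion} in Appendix~\ref{app:taylor}. Expand $\mathcal{L}(\boldsymbol{W}')$ to second order around $\boldsymbol{W}$, giving the remainder $\tfrac12(\boldsymbol{W}'-\boldsymbol{W})^\top H(\boldsymbol{W}'-\boldsymbol{W})$. Instead of bounding $H\preceq\lambda_{\max}^H\mathbb{I}$, bound $H\preceq\tilde{\beta}\boldsymbol{P}^{-1}$. This is exactly the max-Rayleigh statement above, equivalently $\boldsymbol{P}^{1/2}H\boldsymbol{P}^{1/2}\preceq\tilde{\beta}\mathbb{I}$ conjugated by $\boldsymbol{P}^{-1/2}$. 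It immediately gives \eqref{eq:relative_smooth}, with $\Vert\cdot\Vert_{\boldsymbol{P}^{-1}}^2$ read as the quadratic form $\boldsymbol{u}^\top\boldsymbol{P}^{-1}\boldsymbol{u}$.

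For (ii), I would mirror how the PL constant $\alpha=\lambda_{\min}^H$ is justified for GD, namely the strongly convex quadratic case. The lower bound $H\succeq\tilde{\alpha}\boldsymbol{P}^{-1}$ gives relative strong convexity:
\[
\mathcal{L}(\boldsymbol{W}')\geq\mathcal{L}(\boldsymbol{W})+\nabla\mathcal{L}(\boldsymbol{W})^\top(\boldsymbol{W}'-\boldsymbol{W})+\frac{\tilde{\alpha}}{2}\Vert\boldsymbol{W}'-\boldsymbol{W}\Vert_{\boldsymbol{P}^{-1}}^2.
\]
I would then minimize both sides over $\boldsymbol{W}'$. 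The left side gives at least $\mathcal{L}^*$. The right side is minimized at $\boldsymbol{W}'-\boldsymbol{W}=-\tilde{\alpha}^{-1}\boldsymbol{P}\nabla\mathcal{L}(\boldsymbol{W})$, with value $\mathcal{L}(\boldsymbol{W})-\frac{1}{2\tilde{\alpha}}\nabla\mathcal{L}^\top\boldsymbol{P}\nabla\mathcal{L}$. Rearranging yields \eqref{eq:relative_PL}. This is the standard "strong convexity implies PL" argument, carried out in the $\boldsymbol{P}^{-1}$ geometry.

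The main obstacle is that $\boldsymbol{P}$ depends on $G=\nabla\mathcal{L}(\boldsymbol{W})$, so it is not a fixed metric across points. Also, a genuine PL inequality should not need global convexity. I would handle this the way the paper does for $\alpha,\beta$: adopt the local quadratic model with $H$ and $\boldsymbol{P}$ frozen at the current iterate, which is the same approximation behind \eqref{eq:Taylor_expansion}. I would then state explicitly that the inequalities hold under this model, with the same caveats as \textbf{A1}--\textbf{A2}.

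A secondary point is that $\tilde{\alpha}>0$ requires $H\succ 0$. I would either assume this, or restrict to the subspace where $H$ is positive definite, as is implicit in taking $\alpha=\lambda_{\min}^H$.
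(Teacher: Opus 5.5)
Your proposal is correct and follows essentially the paper's route. Part (i) is the second-order expansion combined with $H\preceq\tilde{\beta}\boldsymbol{P}^{-1}$, obtained through the congruence by $\boldsymbol{P}^{1/2}$ and the similarity of $\boldsymbol{P}H$ with $\boldsymbol{P}^{1/2}H\boldsymbol{P}^{1/2}$. Part (ii) derives relative strong convexity from $H\succeq\tilde{\alpha}\boldsymbol{P}^{-1}$ and converts it to the PL inequality.

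The one difference is in that final conversion. The paper sets $\boldsymbol{W}'=\boldsymbol{W}^*$, applies the dual-norm Cauchy--Schwarz bound, and then maximizes a scalar quadratic, with some slips between $r$ and $r^2$ and in the constant of that quadratic. You instead minimize both sides over $\boldsymbol{W}'$ directly, which is the same computation done more cleanly and does not need the minimizer to be attained. The caveats you flag (frozen $\boldsymbol{P}$, $H\succ 0$, full-rank $G$) are assumptions the paper leaves implicit.
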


The proof of Lemma \ref{lem:smooth} can be found in Appendix \ref{sec:proof_smooth}, which lays foundation for Theorem \ref{thm:Muon_convergence_rate} about the convergence rate of Muon.

\begin{theorem}
    \label{thm:Muon_convergence_rate}
    Assuming the assumptions (A1) and (A2) and setting the learning rate $\eta = \frac{1}{\tilde{\beta}}$, we have
    \begin{equation}
\mathcal{L}\left(\boldsymbol{W}_{t}\right)-\mathcal{L}^{*}\leq\left(1-\frac{\tilde{\alpha}}{\tilde{\beta}}\right)^{t}\left(\mathcal{L}\left(\boldsymbol{W}_{0}\right)-\mathcal{L}^{*}\right).\label{eq:Muon_covergence}
\end{equation}
\end{theorem}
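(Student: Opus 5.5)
The plan mirrors the proof of Theorem~\ref{thm:GD_converenge_rate}, carried out in the geometry induced by the preconditioner. The two ingredients are Lemma~\ref{lem:smooth}(i), relative smoothness with constant $\tilde{\beta}$ in the $\boldsymbol{P}^{-1}$-norm, and Lemma~\ref{lem:smooth}(ii), a relative PL inequality with constant $\tilde{\alpha}$ in the $\boldsymbol{P}$-norm. With these in hand, the argument is the standard one-step descent plus PL contraction, iterated $t$ times.

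\textbf{Step 1 (one-step descent).} Fix an iteration and write $\boldsymbol{P}=\boldsymbol{P}_t$ and $\boldsymbol{g}=\boldsymbol{g}_t=\nabla\mathcal{L}(\boldsymbol{W}_{t-1})$. Plug $\boldsymbol{W}'-\boldsymbol{W}=-\eta\boldsymbol{P}\boldsymbol{g}$ into \eqref{eq:relative_smooth}. The linear term is $-\eta\,\boldsymbol{g}^\top\boldsymbol{P}\boldsymbol{g}=-\eta\Vert\boldsymbol{g}\Vert_{\boldsymbol{P}}^2$. The quadratic term is
\[
\frac{\tilde{\beta}}{2}\eta^2(\boldsymbol{P}\boldsymbol{g})^\top\boldsymbol{P}^{-1}(\boldsymbol{P}\boldsymbol{g})=\frac{\tilde{\beta}}{2}\eta^2\Vert\boldsymbol{g}\Vert_{\boldsymbol{P}}^2,
\]
which uses the symmetry of $\boldsymbol{P}$ (it is a Kronecker product of symmetric positive definite factors). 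With $\eta=1/\tilde{\beta}$ this gives
\[
\mathcal{L}(\boldsymbol{W}_t)\le\mathcal{L}(\boldsymbol{W}_{t-1})-\frac{1}{2\tilde{\beta}}\Vert\boldsymbol{g}\Vert_{\boldsymbol{P}}^2 .
\]

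\textbf{Step 2 (PL contraction and unrolling).} Apply \eqref{eq:relative_PL} at $\boldsymbol{W}_{t-1}$, which gives $\frac12\Vert\boldsymbol{g}\Vert_{\boldsymbol{P}}^2\ge\tilde{\alpha}(\mathcal{L}(\boldsymbol{W}_{t-1})-\mathcal{L}^*)$. Subtracting $\mathcal{L}^*$ from both sides of the Step~1 bound yields
\[
\mathcal{L}(\boldsymbol{W}_t)-\mathcal{L}^*\le\Bigl(1-\frac{\tilde{\alpha}}{\tilde{\beta}}\Bigr)\bigl(\mathcal{L}(\boldsymbol{W}_{t-1})-\mathcal{L}^*\bigr).
\]
Induction on $t$ then gives \eqref{eq:Muon_covergence}. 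Assumptions (A1) and (A2) are used only through Lemma~\ref{lem:smooth}.

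\textbf{Main obstacle.} The delicate point is that $\boldsymbol{P}_t$, and hence $\tilde{\alpha}$ and $\tilde{\beta}$, depend on $G_t$ and therefore on the iterate. A single constant rate requires a single step size and contraction factor valid at every iteration. I would address this by reading $\tilde{\alpha}$ and $\tilde{\beta}$ as uniform bounds over the trajectory, that is, $\tilde{\beta}\ge\sup_t\lambda_{\max}(\boldsymbol{P}_tH)$ and $\tilde{\alpha}\le\inf_t\lambda_{\min}(\boldsymbol{P}_tH)$. Then $\eta=1/\tilde{\beta}$ is admissible at every step, and Step~1 still gives descent because the coefficient $-\eta+\tfrac{\tilde{\beta}_t}{2}\eta^2$ is at most $-\tfrac{1}{2\tilde{\beta}}$ whenever $\tilde{\beta}_t\le\tilde{\beta}$. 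Alternatively, one can use per-step constants and obtain the product $\prod_s(1-\tilde{\alpha}_s/\tilde{\beta}_s)$.

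A second subtlety is invertibility of $G_tG_t^\top$, which is needed for $\boldsymbol{P}_t$ and $\boldsymbol{P}_t^{-1}$ to exist. I would assume $G_t$ has full row rank $m$, or otherwise restrict to its row space, where the pseudo-inverse square root agrees with $UV^\top$.
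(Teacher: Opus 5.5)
Your proposal is correct and follows the paper's proof essentially line for line. The paper plugs the Muon step $-\eta\boldsymbol{P}_t\boldsymbol{g}_t$ into the relative smoothness bound of Lemma~\ref{lem:smooth}(i), obtains the decrease $\frac{1}{2\tilde{\beta}}\Vert\boldsymbol{g}_t\Vert_{\boldsymbol{P}_t}^2$ at $\eta=1/\tilde{\beta}$, applies the relative PL inequality of Lemma~\ref{lem:smooth}(ii) at $\boldsymbol{W}_{t-1}$, and unrolls. Your extra remarks are sound and address points the paper leaves implicit: reading $\tilde{\alpha},\tilde{\beta}$ as trajectory-uniform bounds (or using a product of per-step factors), and the invertibility of $G_tG_t^\top$.
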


The proof of Theorem \ref{thm:Muon_convergence_rate} can be found in Appendix \ref{sec:Muon_convergence}. The following theorem quantifies the relationship between the convergence rates of SGD and Muon optimizers under the assumption that we use the K-FAC to approximate the Hessian matrix.

\begin{theorem}
\label{thm:comparision}
Assume that we use the K-FAC to estimate the Hessian matrix, i.e., $H \approx \boldsymbol{X}^{\top}\boldsymbol{X}\otimes G_{t}G_{t}^{\top}$. Then

\smallskip
\noindent
(i)\; $\displaystyle\frac{\alpha}{\beta}
    =\frac{\lambda_{\text{min}}(\boldsymbol{X}^{\top}\boldsymbol{X})\,\lambda_{\text{min}}(G_{t}G_{t}^{\top})}
           {\lambda_{\text{max}}(\boldsymbol{X}^{\top}\boldsymbol{X})\,\lambda_{\text{max}}(G_{t}G_{t}^{\top})},$

\smallskip
\noindent
(ii)\; $\displaystyle\frac{\tilde{\alpha}}{\tilde{\beta}}
    =\frac{\lambda_{\text{min}}(\boldsymbol{X}^{\top}\boldsymbol{X})\,\lambda_{\text{min}}(G_{t}G_{t}^{\top})^{1/2}}
           {\lambda_{\text{max}}(\boldsymbol{X}^{\top}\boldsymbol{X})\,\lambda_{\text{max}}(G_{t}G_{t}^{\top})^{1/2}},$

\smallskip
\noindent
(iii)\; $\displaystyle\frac{\alpha}{\beta}
    =\frac{\tilde{\alpha}}{\tilde{\beta}}\sqrt{\frac{\lambda_{\text{min}}(G_{t}G_{t}^{\top})}{\lambda_{\text{max}}(G_{t}G_{t}^{\top})}}
     <\frac{\tilde{\alpha}}{\tilde{\beta}}.$
\end{theorem}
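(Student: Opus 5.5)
The plan is to compute everything through the standard eigenvalue rule for Kronecker products. Write $A:=\boldsymbol{X}^{\top}\boldsymbol{X}$ and $B:=G_tG_t^{\top}$. Both are symmetric positive semidefinite, and for the ratios to be meaningful we assume they are positive definite. Under this assumption the eigenvalues of $A\otimes B$ are exactly the products $\lambda_i(A)\lambda_j(B)$, all nonnegative. So the extreme eigenvalues of the approximated Hessian are
\[
\lambda_{\max}^H=\lambda_{\max}(A)\lambda_{\max}(B), \qquad \lambda_{\min}^H=\lambda_{\min}(A)\lambda_{\min}(B).
\]
Taking $\alpha=\lambda_{\min}^H$ and $\beta=\lambda_{\max}^H$, as suggested in (A1)--(A2), gives (i) immediately.

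For (ii), I first reconcile the dimensions. The preconditioner is $\boldsymbol{P}=\mathbb{I}_n\otimes B^{-1/2}$, and the identity factor must have the size of $A$ for the product to be defined, so I read $\mathbb{I}_n$ as the identity of the matching dimension. The mixed-product property then gives
\[
\boldsymbol{P}H=(\mathbb{I}\cdot A)\otimes(B^{-1/2}B)=A\otimes B^{1/2}.
\]
This matrix is again a Kronecker product of PSD matrices. Since $B^{1/2}$ commutes with $B$, its eigenvalues are $\lambda_j(B)^{1/2}$. Applying the Kronecker eigenvalue rule once more gives
\[
\tilde\alpha=\lambda_{\min}(A)\lambda_{\min}(B)^{1/2}, \qquad \tilde\beta=\lambda_{\max}(A)\lambda_{\max}(B)^{1/2},
\]
and hence (ii). Note that $\boldsymbol{P}H$ is symmetric here, because it is a Kronecker product of symmetric matrices. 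We therefore need no similarity argument of the form $\boldsymbol{P}^{1/2}H\boldsymbol{P}^{1/2}$.

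For (iii), divide (i) by (ii). The factor $\lambda_{\min}(A)/\lambda_{\max}(A)$ cancels, and what remains is
\[
\frac{\lambda_{\min}(B)/\lambda_{\max}(B)}{\bigl(\lambda_{\min}(B)/\lambda_{\max}(B)\bigr)^{1/2}}=\sqrt{\lambda_{\min}(B)/\lambda_{\max}(B)}.
\]
This gives the equality in (iii). For the strict inequality, note that this square root is less than $1$ exactly when $\lambda_{\min}(B)<\lambda_{\max}(B)$, that is, when the gradient covariance is anisotropic. I would state this condition explicitly, because if $B$ is isotropic the inequality becomes an equality.

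The main obstacle is bookkeeping rather than depth, and it has three parts. The first is the dimension and ordering convention of the Kronecker factors, since the vec convention must match the ordering in $A\otimes B$. The second is ensuring positive definiteness, so that $B^{-1/2}$ exists and the minimum eigenvalues are nonzero. The third is making the anisotropy hypothesis explicit for the strict inequality.
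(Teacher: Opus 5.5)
Your proposal is correct and follows essentially the same route as the paper's proof: the Kronecker eigenvalue-product rule gives $\alpha,\beta$; the mixed-product identity $\boldsymbol{P}H=\boldsymbol{X}^{\top}\boldsymbol{X}\otimes(G_tG_t^{\top})^{1/2}$ gives $\tilde\alpha,\tilde\beta$; and dividing (i) by (ii) gives (iii). The hypotheses you make explicit are left implicit in the paper: positive definiteness of both factors, an identity factor whose dimension matches $\boldsymbol{X}^{\top}\boldsymbol{X}$, and anisotropy of $G_tG_t^{\top}$. They are genuinely needed, since the strict inequality in (iii) fails without them.
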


Since $\frac{\tilde{\alpha}}{\tilde{\beta}}>\frac{\alpha}{\beta}$ from Theorem~\ref{thm:comparision}, we have $1-\frac{\tilde{\alpha}}{\tilde{\beta}}<1-\frac{\alpha}{\beta}$, showing that Muon converges faster than GD. The proof is deferred to Appendix~\ref{sec:proof_comparison}. Section~\ref{sec:exp_convergence} provides experimental validation of this convergence acceleration.

\section{Experiments}
\label{sec:experiments}

To empirically examine the theoretical insights developed in Section \ref{sec:theory}, we design two complementary sets of experiments comparing Muon against SGD. The first set focuses on stability and learning-rate sensitivity, while the second evaluates convergence speed under a more controlled training regime. Together, these experiments aim to probe two key aspects of our analysis: whether Muon can sustain substantially larger learning rates before divergence, consistent with its implicit spectral flattening, and whether it achieves faster convergence when both optimizers operate under the same learning rate.

\begin{figure*}[t]
    \centering
    
    \begin{subfigure}[t]{0.48\textwidth}
        \centering
        \includegraphics[width=\linewidth]{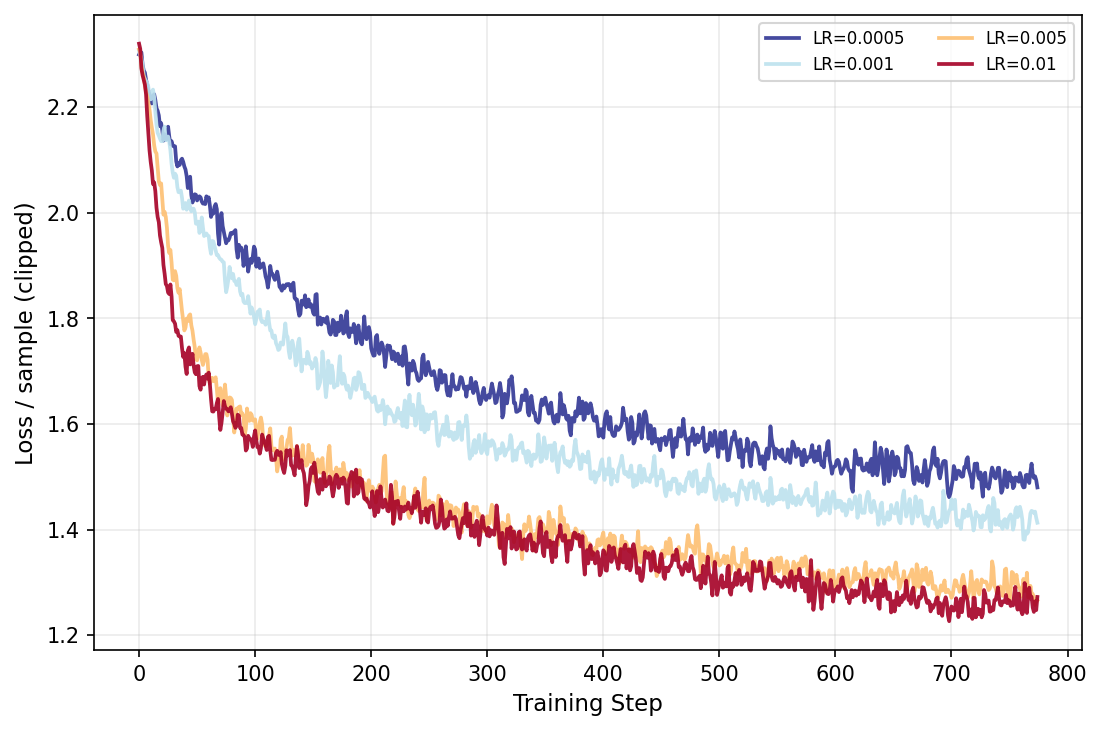}
        \caption{Muon}
    \end{subfigure}
    \hfill
    \begin{subfigure}[t]{0.48\textwidth}
        \centering
        \includegraphics[width=\linewidth]{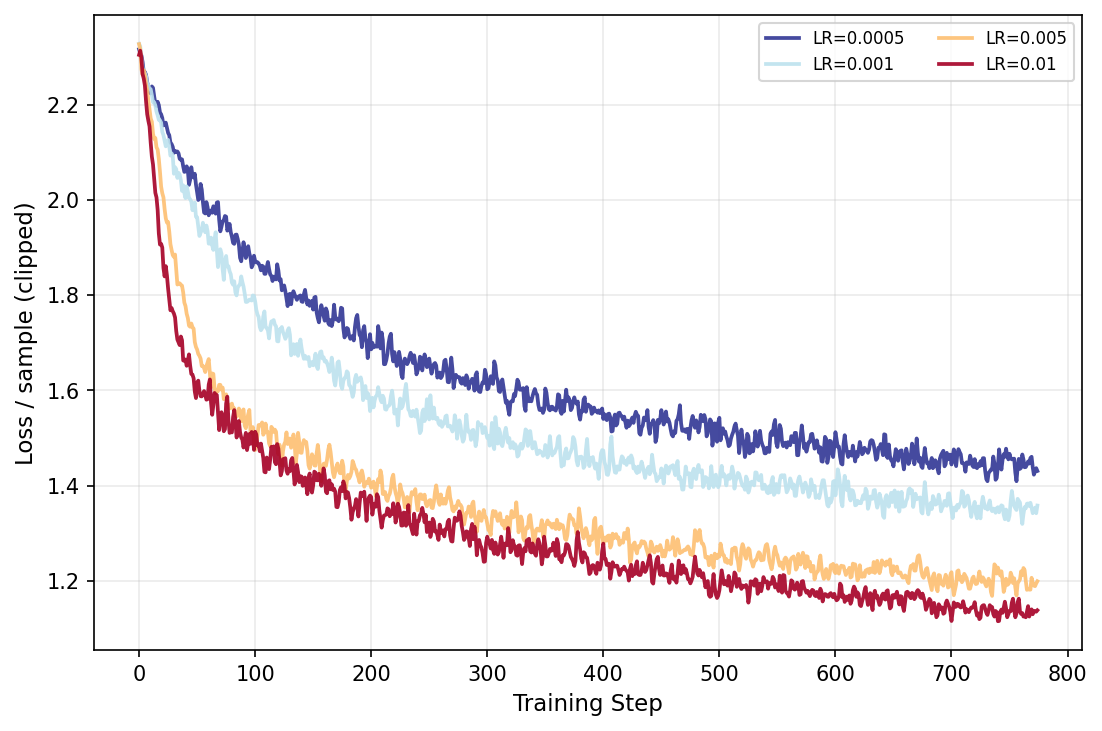}
        \caption{Muon + Momentum}
    \end{subfigure}

    \vspace{0.3cm}

    \begin{subfigure}[t]{0.48\textwidth}
        \centering
        \includegraphics[width=\linewidth]{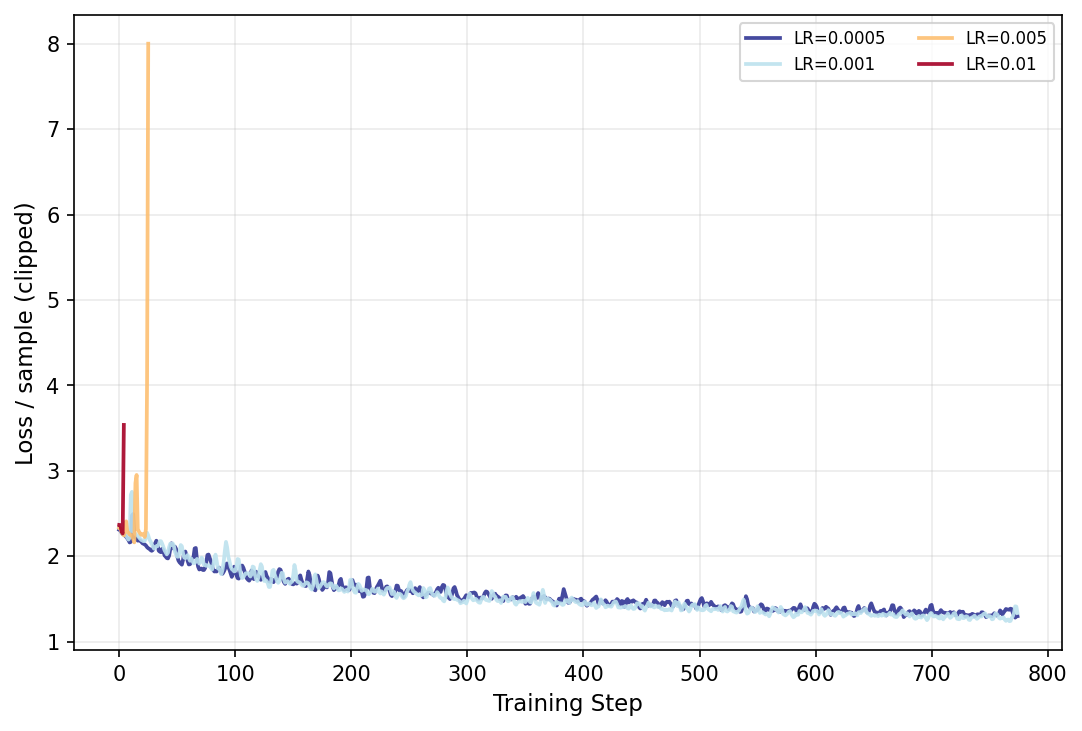}
        \caption{SGD}
    \end{subfigure}
    \hfill
    \begin{subfigure}[t]{0.48\textwidth}
        \centering
        \includegraphics[width=\linewidth]{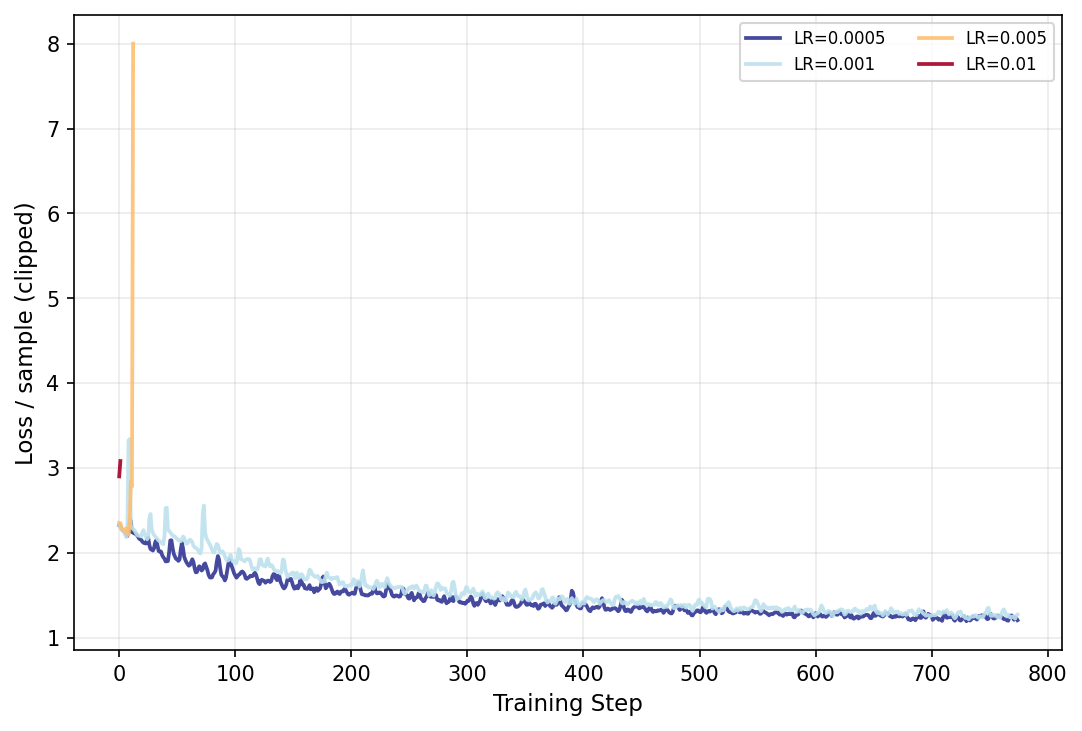}
        \caption{SGD + Momentum}
    \end{subfigure}

    \caption{
    Training loss curves under different learning rates. At higher learning rates, SGD diverges within the first few iterations, while Muon remains stable and continues to decrease the loss.
    }
    \label{fig:exp1_loss}
\end{figure*}

\textbf{General Experimental Setup and Isolation Strategy.}
All experiments use CIFAR-10 \citep{Krizhevsky2009LearningML} with CifarNet
\citep{7422783}. To isolate the effect of the optimizer under evaluation, we
adopt a dual-optimizer strategy: 4D convolutional kernels are updated by the
target optimizer (Muon or SGD) with learning rate $\eta_{\mathrm{conv}}$, while
all remaining parameters (biases, normalization parameters, classification head)
are updated by a fixed, carefully tuned reference optimizer. This ensures that
observed differences in stability and convergence reflect how each optimizer
handles the high-dimensional curvature of the convolutional layers, cleanly
separating their matrix-valued update behavior from the standard optimization
dynamics of the rest of the network.

\subsection{Maximal Stable Learning Rate}
\label{sec:exp_lr}


\textbf{Experimental Design.}
To directly observe the intrinsic stability of each update rule, we remove two components that can mask optimization issues. First, we eliminate all Batch Normalization (BN) layers \citep{ioffe2015batch}. BN re-centers and rescales activations, which can allow the classifier head and bias terms to continue learning even when convolutional weights become poorly conditioned. Without BN, instability propagates directly through the network. Second, we use a constant learning rate without scheduling, so each optimizer is evaluated under a fixed step size. We vary the convolutional learning rate over $\eta_{\mathrm{conv}} \in \{0.0005, 0.001, 0.005, 0.01\}$.



\textbf{Training Stability and Loss Trajectories.}
Figure~\ref{fig:exp1_loss} shows the training loss curves for both optimizers across the learning rate range. At smaller learning rates, both Muon and SGD reduce the training loss reliably. As the learning rate increases, their behaviors begin to diverge. SGD, with or without momentum, becomes unstable and its loss rapidly increases within the first few iterations. In contrast, Muon remains stable and continues to decrease the loss smoothly, even at learning rates where SGD fails. This behavior is consistent with the expected effect of spectral flattening, which moderates extreme update directions and improves robustness under larger step sizes.

We also conduct a mechanistic analysis of parameter and gradient norms during early training, which further supports the spectral flattening interpretation; details are provided in Appendix~\ref{app:mechanistic_norms}.

\subsubsection{A new perpestive on normalization layer.}
\label{sec:normalization}

\begin{figure}
    \centering
    \includegraphics[width=0.7\linewidth]{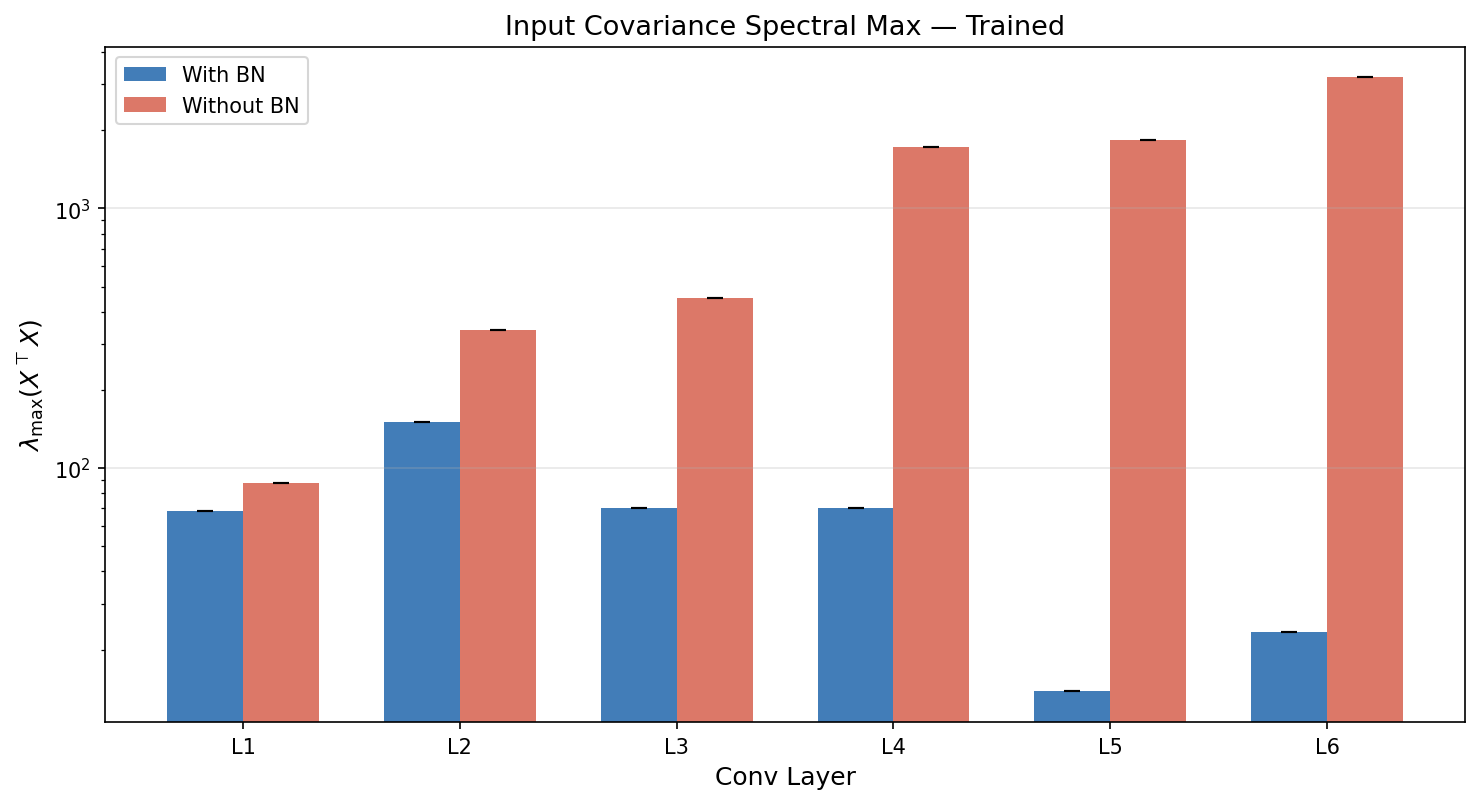}
    \caption{Layer-wise values of $\lambda_{\text{max}}(X^\top X)$ for trained CifarNet models with and without Batch Normalization. Batch Normalization substantially reduces the spectral scale of layer inputs, especially in deeper layers.}
    \label{fig:BN_max}
\end{figure}

\begin{figure}
    \centering
    \includegraphics[width=0.8\textwidth]{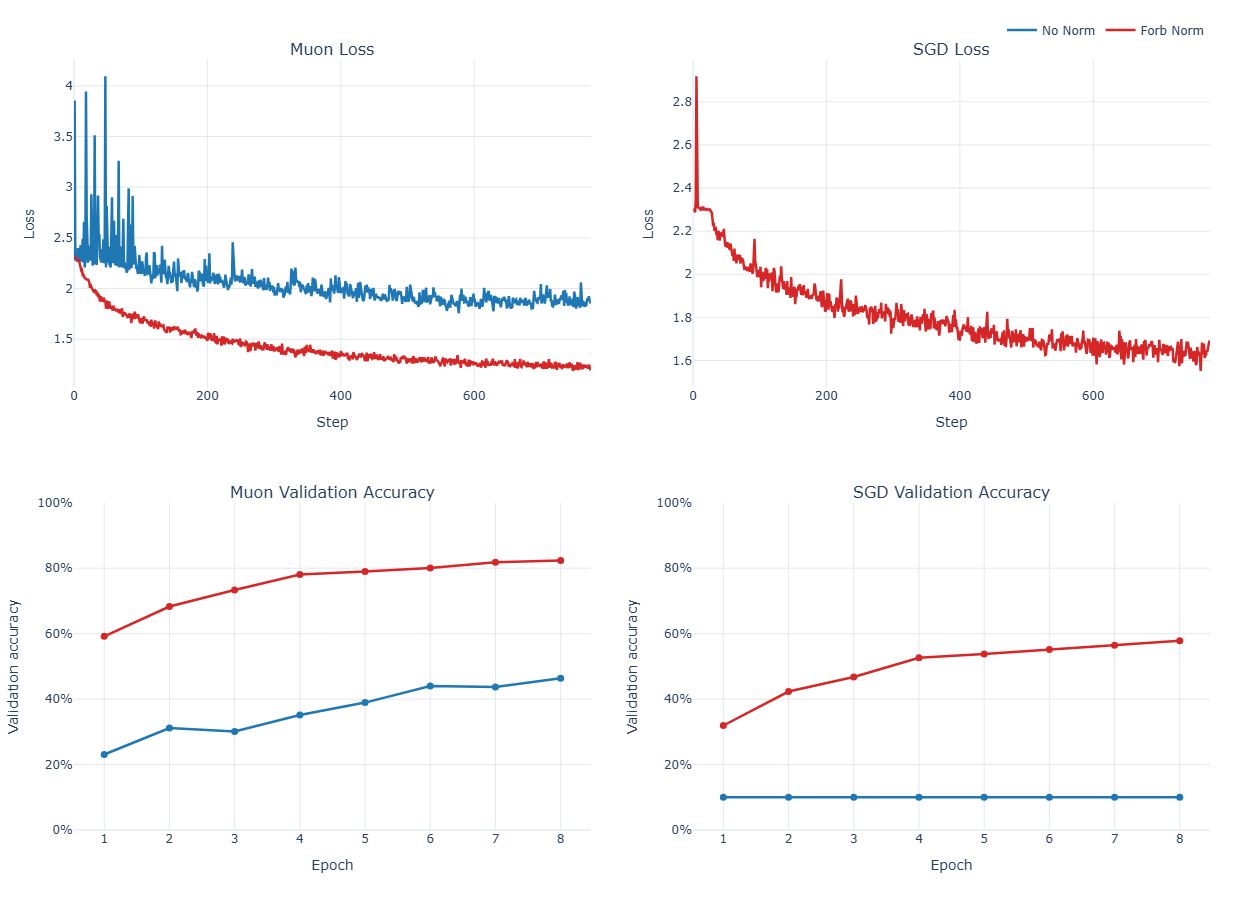}
    \caption{Training loss and validation accuracy of CifarNet with and without FrobNorm under Muon and SGD at 0.25 learning rate. FrobNorm improves stability and accuracy for both optimizers.}
    \label{fig:norm_cnn}
\end{figure}

\begin{figure*}[htpb] \centering \begin{subfigure}{0.48\textwidth} \includegraphics[width=\linewidth]{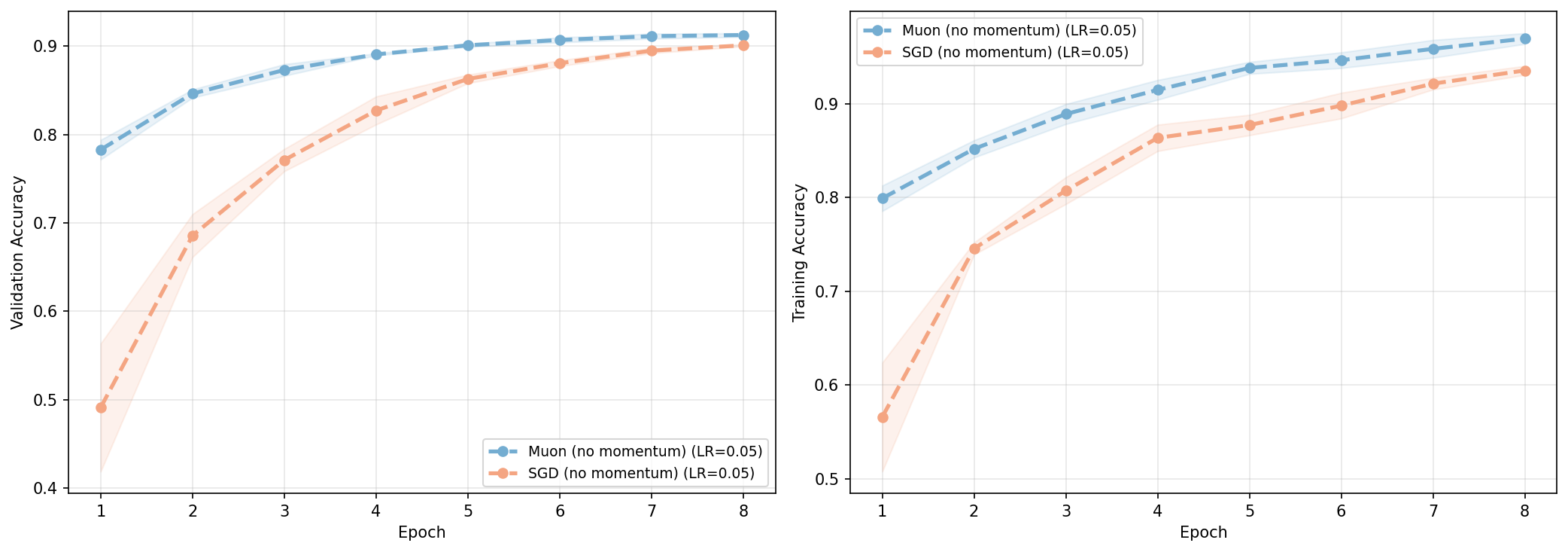} \caption{No Momentum} \end{subfigure}\hfill \begin{subfigure}{0.48\textwidth} \includegraphics[width=\linewidth]{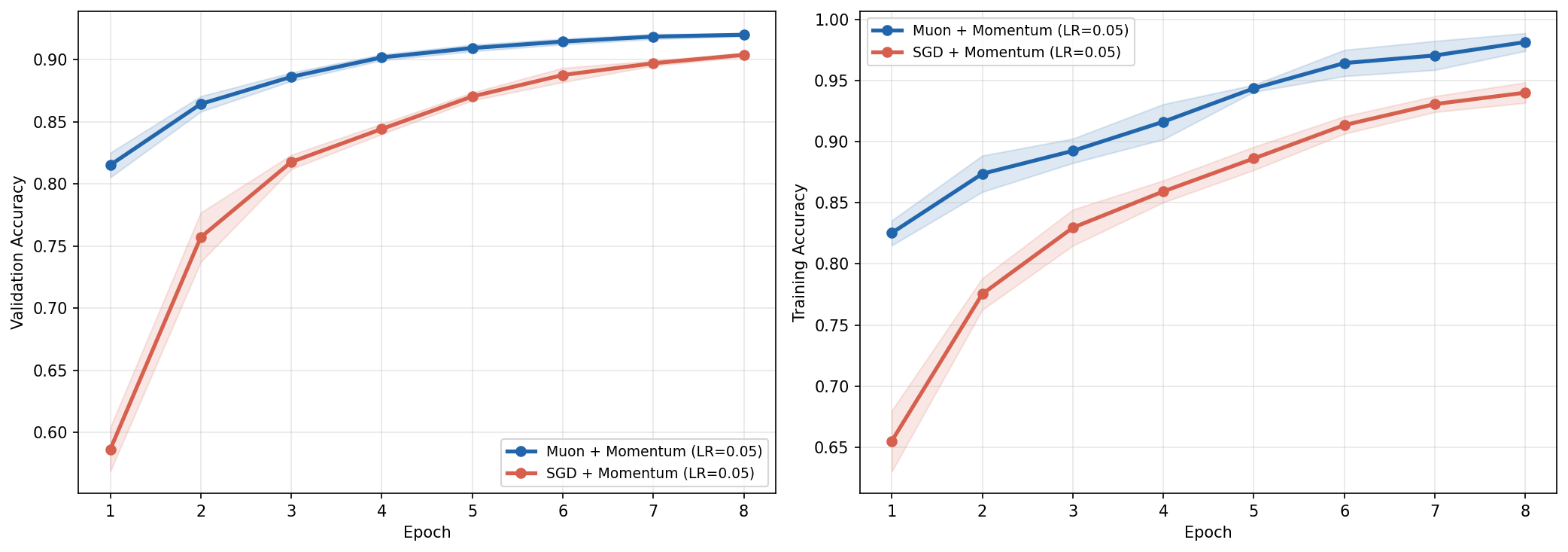} \caption{With Momentum} \end{subfigure} \caption{Training and Validation Accuracy vs. Epoch. Muon accelerates the learning process, achieving higher accuracy earlier than SGD despite identical learning rates.} \label{fig:exp2_acc} \end{figure*}

\textbf{Analysis.}
From Section~\ref{sec:lr_muon}, we observe that the maximal stable learning rate of a layer under both SGD and Muon is proportional to $\frac{1}{\lambda_{\text{max}}(X^\top X)}$, where $X$ is the input to that layer. This implies that if we can transform $X$ so that $\lambda_{\text{max}}(X^\top X)$ becomes smaller, the layer can tolerate a higher learning rate.

\textbf{Why Batch Normalization allows higher learning rates.}
It is widely observed that Batch Normalization enables higher learning rates, yet a formal
explanation has remained elusive. Our analysis suggests a mechanism: Batch Normalization
reduces $\lambda_{\text{max}}(X^\top X)$, which our theory identifies as the key quantity
governing the maximal stable learning rate. We verify this by computing
$\lambda_{\text{max}}(X^\top X)$ across layers of CifarNet trained with and without Batch
Normalization (Figure~\ref{fig:BN_max}). Without Batch Normalization,
$\lambda_{\text{max}}(X^\top X)$ grows rapidly with depth, reaching values on the order of
$10^3$ in later layers; with Batch Normalization, it remains small and stable. This indicates
that Batch Normalization does more than stabilize activation scales---it prevents the input
matrix $X$ from concentrating energy along a single dominant direction, directly enabling
larger learning rates.

\textbf{A new principle for designing normalization layers.}
Our analysis suggests a general design principle: to enable higher learning rates and faster convergence, a normalization layer should transform the input $X$ to reduce $\lambda_{\text{max}}(X^\top X)$. We test this with a simple scheme we call \emph{FrobNorm}. Since $\lambda_{\text{max}}(X^\top X) \le \|X\|_F^2$, normalizing as $\widetilde{X} = \frac{X}{\|X\|_F}$ guarantees $\lambda_{\text{max}}(\widetilde{X}^\top \widetilde{X}) \le 1$. We apply FrobNorm after each convolutional layer of CifarNet and train with SGD and Muon at the unusually high learning rate of 0.25. Figure~\ref{fig:norm_cnn} shows the results. FrobNorm consistently improves optimization over the no-normalization baseline: with Muon, it yields a smoother loss curve and higher validation accuracy ($>80\%$ vs.\ $<50\%$ after eight epochs); with SGD, the baseline fails to train entirely, while FrobNorm reaches nearly $60\%$ accuracy. Because FrobNorm directly enforces a spectral bound, these results support the principle that normalization should control the dominant spectral direction of the input rather than only its coordinate-wise scale. We examine this principle for Transformer architectures in Appendix~\ref{app:transformer}.

\subsection{Convergence Rate Acceleration}
\label{sec:exp_convergence}


\textbf{Experimental Design.} 
To accurately measure convergence speed in a standard deep learning environment, we restore both BN and a linear learning rate scheduler. Furthermore, we constrain both Muon and SGD to operate under the exact same learning rate ($\eta_{conv} = 0.05$). The rationale for this setup is twofold. First, BN and schedulers are ubiquitous in practice and necessary to achieve competitive absolute accuracy. Second, using identical learning rates strictly isolates the intrinsic efficiency of the optimizers. By leveling the playing field, we ensure that any observed acceleration is purely the result of Muon's spectral preconditioning, rather than the trivial byproduct of taking larger step sizes. Each configuration is executed over 5 independent runs to capture variance, with solid lines representing the mean and shaded regions indicating the standard deviation.

\begin{figure*}[htpb]
    \centering
    \begin{subfigure}{0.48\textwidth}
        \includegraphics[width=\linewidth]{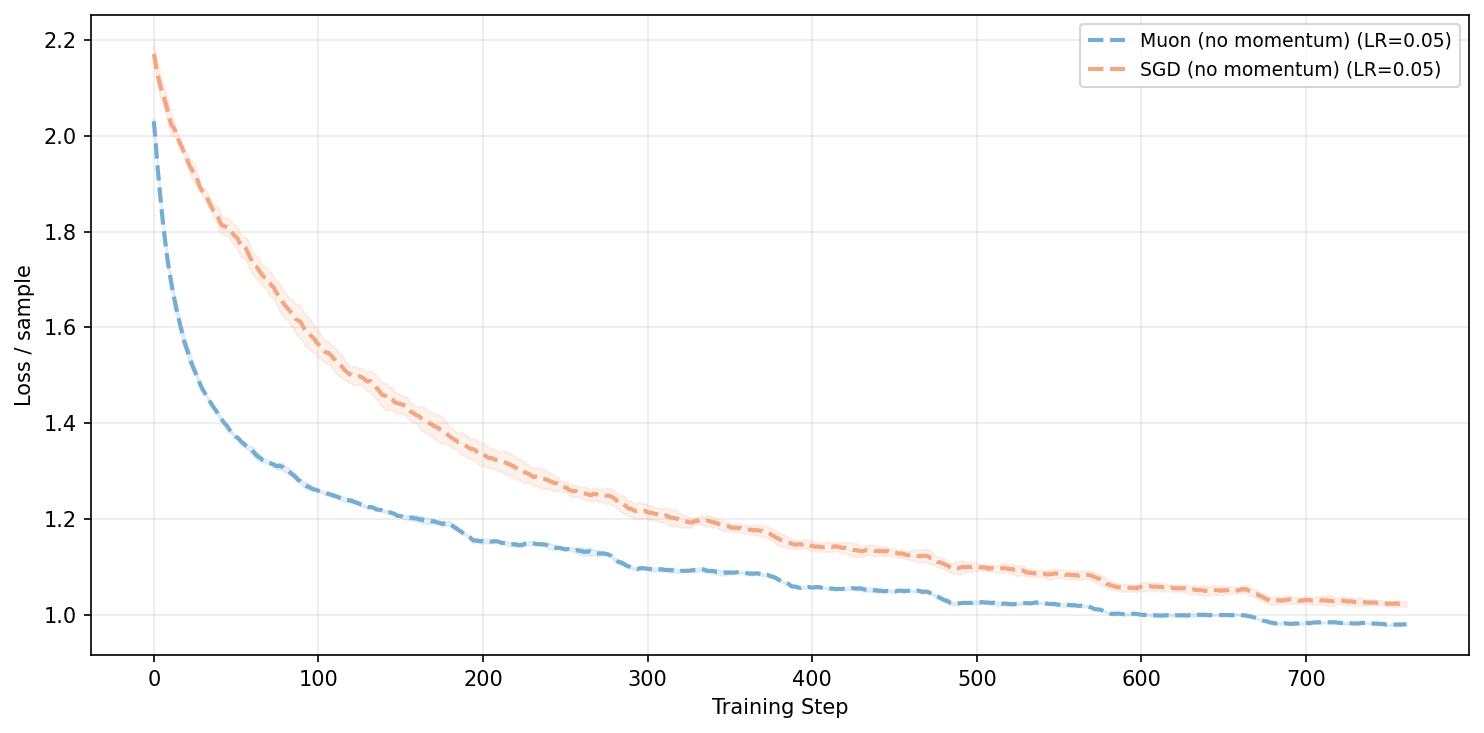}
        \caption{No Momentum}
    \end{subfigure}\hfill
    \begin{subfigure}{0.48\textwidth}
        \includegraphics[width=\linewidth]{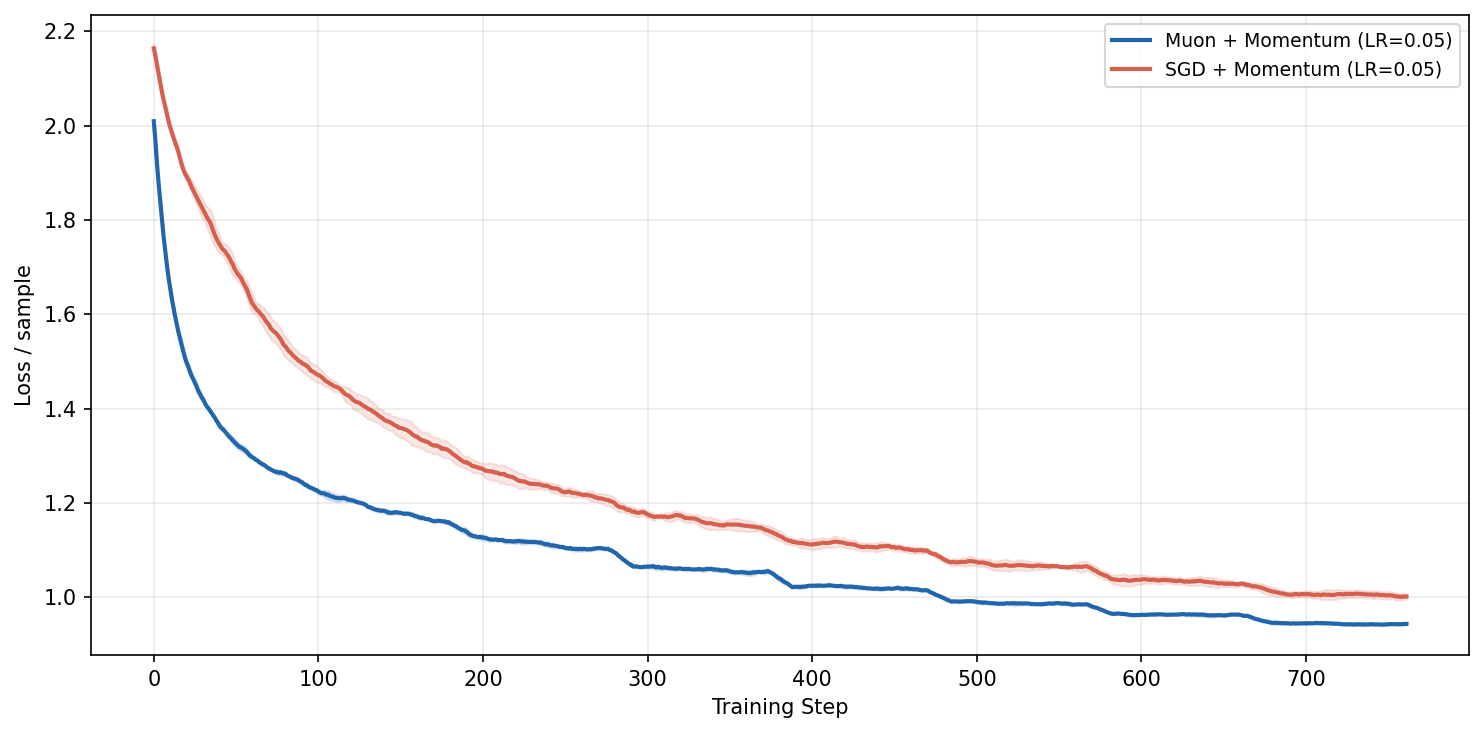}
        \caption{With Momentum}
    \end{subfigure}
    \caption{Training Loss vs. Step. Muon exhibits a consistently steeper loss descent compared to SGD.}
    \label{fig:exp2_loss}
\end{figure*}

\begin{figure*}[htpb]
    \centering
    \begin{subfigure}{0.48\textwidth}
        \includegraphics[width=\linewidth]{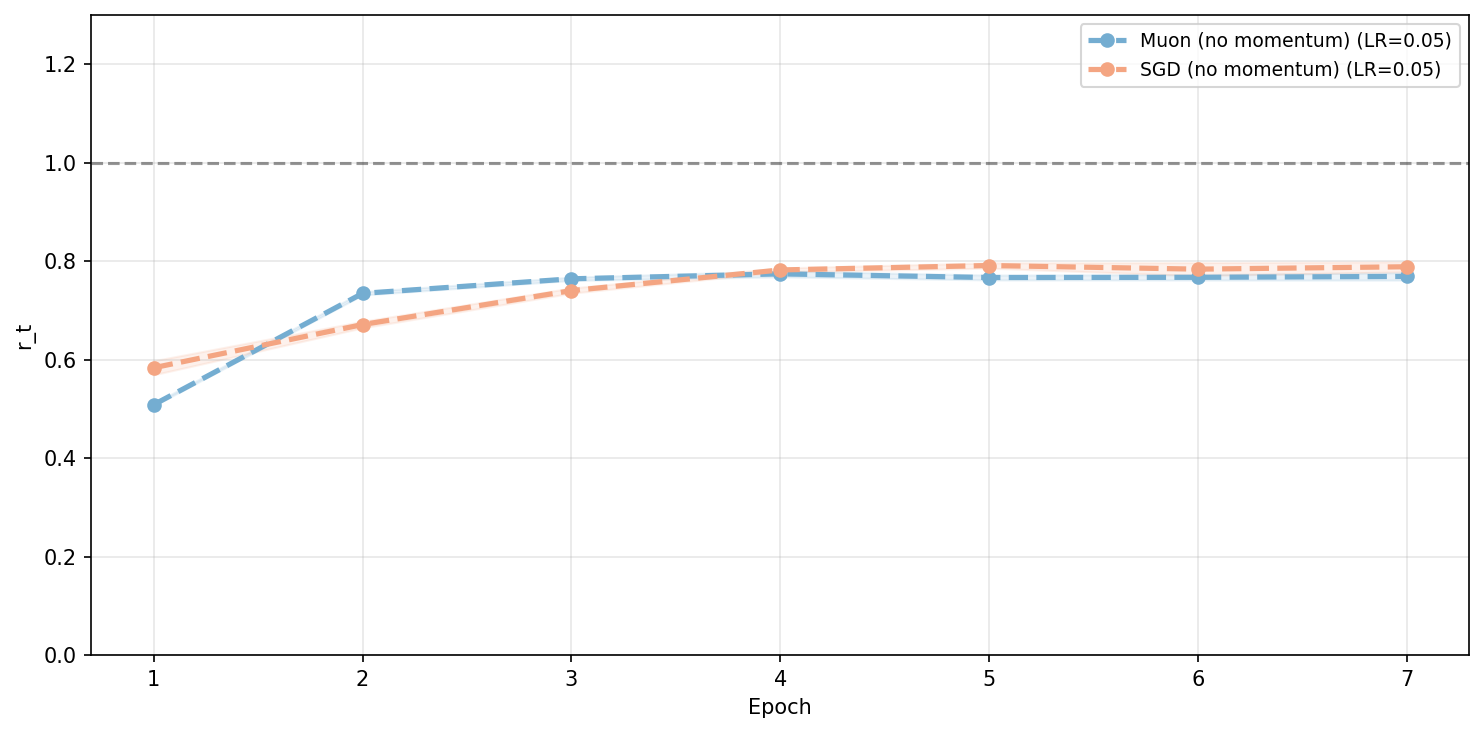}
        \caption{No Momentum}
    \end{subfigure}\hfill
    \begin{subfigure}{0.48\textwidth}
        \includegraphics[width=\linewidth]{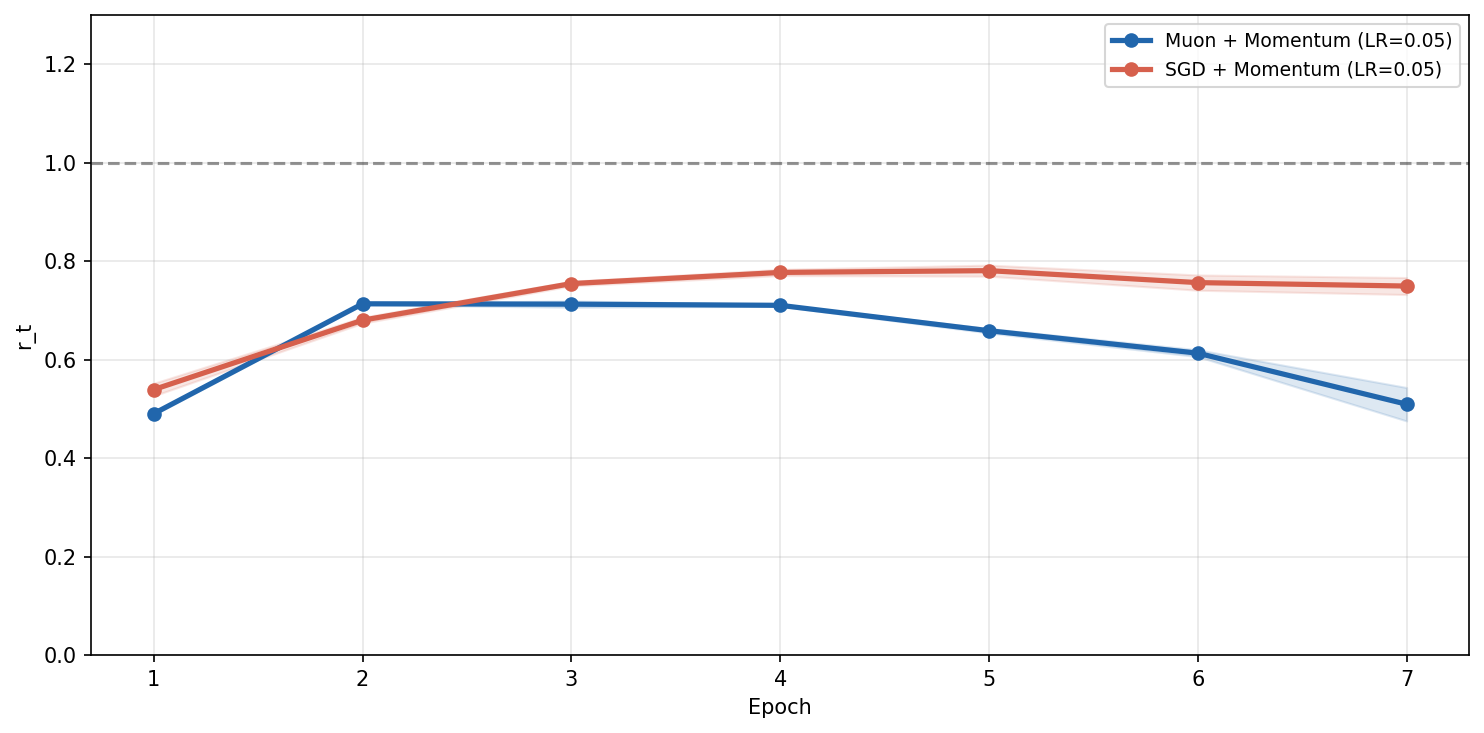}
        \caption{With Momentum}
    \end{subfigure}
    \caption{Empirical Convergence Ratio ($r_t$). A smaller $r_t$ implies a faster linear convergence rate. Muon maintains a consistently lower $r_t$ throughout training, supporting the preconditioned convergence improvement in Theorem \ref{thm:comparision}.}
    \label{fig:exp2_rt}
\end{figure*}

\textbf{Standard Convergence Metrics.}
Figure~\ref{fig:exp2_acc} and Figure~\ref{fig:exp2_loss} illustrate the training dynamics in terms of accuracy and loss. Even when operating at the exact same learning rate of $0.05$, Muon consistently outpaces SGD. The accuracy curves demonstrate that Muon climbs to higher validation and training accuracies significantly earlier in the training process. This is corroborated by the training loss trajectories, where Muon achieves a noticeably steeper and deeper descent. A detailed per-threshold breakdown is provided in Appendix~\ref{sec:threshold_analysis}, confirming that across all targeted validation milestones (from $70\%$ to $91\%$), Muon consistently arrives several epochs ahead of SGD.

\textbf{Theoretical Validation via Convergence Ratio.}
We directly validate the linear convergence rate via the convergence ratio:
\begin{equation}
r_t \;=\; \frac{\mathcal{L}(W_{t}) - \mathcal{L}^{*}}{\mathcal{L}(W_{t-1}) - \mathcal{L}^{*}}
\end{equation}
where $\mathcal{L}^{*}$ is estimated as the minimum loss across all runs. A smaller $r_t$ indicates faster convergence, reflecting a larger relative decrease in the optimality gap per iteration. As shown in Figure~\ref{fig:exp2_rt}, Muon's $r_t$ remains consistently lower than SGD's throughout training, confirming that Muon reduces a larger fraction of the remaining error per epoch, consistent with an improved effective convergence factor.

We additionally verify that these conclusions hold when each optimizer is given its own best learning rate ($\eta_{\mathrm{conv}} = 0.1$ for Muon, $0.01$ for SGD), reflecting a realistic tuning scenario. The results, presented in Appendix~\ref{app:convergence_best_lr}, confirm that Muon's convergence advantage persists under best-case tuning for both optimizers.

\section{Conclusion}

We have shown that Muon's orthogonalization step acts as \emph{spectral flattening}, raising the maximal stable learning rate from a bound set by the largest singular value to one set by an average singular scale, and that under a Kronecker-factored curvature model it improves the effective condition ratio $\tilde{\alpha}/\tilde{\beta}>\alpha/\beta$. Controlled experiments confirm both predictions. Our results help turn Muon's empirical success into a more precise theoretical picture, but they also leave several directions open. The convergence analysis is stated in the deterministic full-batch setting and uses the exact polar factor, while practical Muon uses stochastic gradients, finite Newton--Schulz iterations, momentum, schedules, and additional parameter groups. Extending the theory to these settings, and to larger-scale architectures where optimizer-system interactions matter, is an important next step.

\bibliographystyle{plainnat}
\bibliography{references}

@article{kingma2015adam,
  title={Adam: A method for stochastic optimization},
  author={Kingma, Diederik P and Ba, Jimmy},
  journal={arXiv preprint arXiv:1412.6980},
  year={2015}
}

@misc{jordan2024muon,
  title={Muon: An optimizer for hidden layers in neural networks},
  author={Jordan, Keller},
  year={2024},
  howpublished={\url{https://kellerjordan.github.io/posts/muon/}},
  note={Accessed: 2026-04-21}
}

@inproceedings{martens2015optimizing,
  title={Optimizing neural networks with kronecker-factored approximate curvature},
  author={Martens, James and Grosse, Roger},
  booktitle={International conference on machine learning},
  pages={2408--2417},
  year={2015},
  organization={PMLR}
}

@article{bernstein2024old,
  title={Old optimizer, new norm: An anthology},
  author={Bernstein, Jeremy and Newhouse, Laker},
  journal={arXiv preprint arXiv:2409.20325},
  year={2024}
}

@ARTICLE{7422783,
  author={Setio, Arnaud Arindra Adiyoso and Ciompi, Francesco and Litjens, Geert and Gerke, Paul and Jacobs, Colin and van Riel, Sarah J. and Wille, Mathilde Marie Winkler and Naqibullah, Matiullah and Sánchez, Clara I. and van Ginneken, Bram},
  journal={IEEE Transactions on Medical Imaging}, 
  title={Pulmonary Nodule Detection in CT Images: False Positive Reduction Using Multi-View Convolutional Networks}, 
  year={2016},
  volume={35},
  number={5},
  pages={1160-1169},
  doi={10.1109/TMI.2016.2536809}}

@inproceedings{Krizhevsky2009LearningML,
  title={Learning Multiple Layers of Features from Tiny Images},
  author={Alex Krizhevsky},
  year={2009},
  url={https://api.semanticscholar.org/CorpusID:18268744}
}

@inproceedings{ioffe2015batch,
  title={Batch normalization: Accelerating deep network training by reducing internal covariate shift},
  author={Ioffe, Sergey and Szegedy, Christian},
  booktitle={International conference on machine learning},
  pages={448--456},
  year={2015},
  organization={pmlr}
}

@article{liu2025muon,
  title={Muon is scalable for {LLM} training},
  author={Liu, Jingyuan and Su, Jianlin and Yao, Xingcheng and Jiang, Zhejun and Lai, Guokun and Du, Yulun and Qin, Yidao and others},
  journal={arXiv preprint arXiv:2502.16982},
  year={2025}
}

@article{shah2025practical,
  title={Practical efficiency of {Muon} for pretraining},
  author={Shah, Ishaan and Polloreno, Anthony M. and Stratos, Karl and Monk, Philip and Chaluvaraju, Adarsh and Hojel, Andrew and Ma, Andrew and Thomas, Anil and Tanwer, Ashish and Shah, Darsh J. and others},
  journal={arXiv preprint arXiv:2505.02222},
  year={2025}
}

@article{tveit2025grokking,
  title={Muon optimizer accelerates grokking},
  author={Tveit, Amund and Remseth, Bj{\o}rn and Skogvold, Arve},
  journal={arXiv preprint arXiv:2504.16041},
  year={2025}
}

@article{mehta2025muon,
  title={Muon: Training and trade-offs with latent attention and {MoE}},
  author={Mehta, Sushant and Dandekar, Raj and Dandekar, Rajat and Panat, Sreedath},
  journal={arXiv preprint arXiv:2509.24406},
  year={2025}
}

@article{page2025muonall,
  title={MuonAll: {Muon} variant for efficient finetuning of large language models},
  author={Page, Saurabh and Joshi, Advait and Sonawane, S. S.},
  journal={arXiv preprint arXiv:2511.06086},
  year={2025}
}

@article{gupta2025quantizing,
  title={On quantizing the state of the {Muon} optimizer},
  author={Gupta, Aman and Celente, Rafael and Shivanna, Abhishek and Braithwaite, Daniel Thomas and Dexter, Gregory and Tang, Shao and Udagawa, Hiroto and Silva, Daniel and Ramanath, Rohan and Keerthi, Sathiya},
  journal={arXiv preprint arXiv:2509.23106},
  year={2025}
}

@inproceedings{wen2026fantastic,
  title={Fantastic pretraining optimizers and where to find them},
  author={Wen, Kaiyue and Hall, David and Ma, Tengyu and Liang, Percy},
  booktitle={International Conference on Learning Representations},
  year={2026}
}

@inproceedings{boreiko2025towards,
  title={Towards understanding orthogonalization in {Muon}},
  author={Boreiko, Valentyn and Bu, Zhiqi and Zha, Sheng},
  booktitle={ICML Workshop on High-dimensional Learning Dynamics},
  year={2025}
}

@article{si2025adamuon,
  title={AdaMuon: Adaptive {Muon} optimizer},
  author={Si, Chongjie and Zhang, Debing and Shen, Wei},
  journal={arXiv preprint arXiv:2507.11005},
  year={2025}
}

@article{crawshaw2025exploration,
  title={An exploration of non-{E}uclidean gradient descent: {Muon} and its many variants},
  author={Crawshaw, Michael and Modi, Chirag and Liu, Mingrui and Gower, Robert M.},
  journal={arXiv preprint arXiv:2510.09827},
  year={2025}
}

@article{zhang2026adamimproves,
  title={Adam improves {Muon}: Adaptive moment estimation with orthogonalized momentum},
  author={Zhang, Minxin and Liu, Yuxuan and Schaeffer, Hayden},
  journal={arXiv preprint arXiv:2602.17080},
  year={2026}
}

@article{li2025note,
  title={A note on the convergence of {Muon} and further},
  author={Li, Jiaxiang and Hong, Mingyi},
  journal={arXiv preprint arXiv:2502.02900},
  year={2025}
}

@article{kovalev2025understanding,
  title={Understanding gradient orthogonalization for deep learning via non-{E}uclidean trust-region optimization},
  author={Kovalev, Dmitry},
  journal={arXiv preprint arXiv:2503.12645},
  year={2025}
}

@article{chen2025spectralconstraints,
  title={Muon optimizes under spectral norm constraints},
  author={Chen, Lizhang and Li, Jonathan and Liu, Qiang},
  journal={arXiv preprint arXiv:2506.15054},
  year={2025}
}

@article{davis2025spectralupdates,
  title={When do spectral gradient updates help in deep learning?},
  author={Davis, Damek and Drusvyatskiy, Dmitriy},
  journal={arXiv preprint arXiv:2512.04299},
  year={2025}
}

@inproceedings{wilson2017marginal,
  title={The marginal value of adaptive gradient methods in machine learning},
  author={Wilson, Ashia C and Roelofs, Rebecca and Stern, Mitchell and Srebro, Nati and Recht, Benjamin},
  booktitle={Advances in Neural Information Processing Systems},
  volume={30},
  year={2017}
}

@inproceedings{loshchilov2019decoupled,
  title={Decoupled weight decay regularization},
  author={Loshchilov, Ilya and Hutter, Frank},
  booktitle={International Conference on Learning Representations},
  year={2019}
}

@article{liu2022loss,
  title={Loss landscapes and optimization in over-parameterized non-linear systems and neural networks},
  author={Liu, Chaoyue and Zhu, Libin and Belkin, Mikhail},
  journal={Applied and Computational Harmonic Analysis},
  volume={59},
  pages={85--116},
  year={2022}
}


\appendix
\newpage
\section{Mathematical Proofs}\label{sec:proof}

\subsection{Full Derivations for Main-Text Equations}\label{sec:derivations}

\paragraph{Derivation of Equation~\eqref{eq:Taylor_expansion} (Quadratic Upper Bound).}\label{app:taylor}
Starting from a second-order Taylor expansion around $\boldsymbol{W}_{t-1}$ with perturbation $\Delta\boldsymbol{W}$:
\begin{align*}
\mathcal{L}\left(\boldsymbol{W}_{t-1}+\Delta\boldsymbol{W}\right)
&= \mathcal{L}\left(\boldsymbol{W}_{t-1}\right) + \nabla\mathcal{L}\left(\boldsymbol{W}_{t-1}\right)^{\top}\Delta\boldsymbol{W}
   + \frac{1}{2}\Delta\boldsymbol{W}^{\top}\nabla^{2}\mathcal{L}\left(\boldsymbol{W}_{t-1}\right)\Delta\boldsymbol{W}
   + o(\Vert\Delta\boldsymbol{W}\Vert_{2}^{2}) \\
&\approx \mathcal{L}\left(\boldsymbol{W}_{t-1}\right) + \nabla\mathcal{L}\left(\boldsymbol{W}_{t-1}\right)^{\top}\Delta\boldsymbol{W}
   + \frac{1}{2}\Delta\boldsymbol{W}^{\top}H\Delta\boldsymbol{W} \\
&\leq \mathcal{L}\left(\boldsymbol{W}_{t-1}\right) + \nabla\mathcal{L}\left(\boldsymbol{W}_{t-1}\right)^{\top}\Delta\boldsymbol{W}
   + \frac{\lambda_{\text{max}}^{H}}{2}\Vert\Delta\boldsymbol{W}\Vert_{2}^{2},
\end{align*}
where the first step is the exact second-order expansion, the second step approximates the Hessian by $H$, and the final inequality uses the Rayleigh bound $\Delta\boldsymbol{W}^{\top}H\Delta\boldsymbol{W} \leq \lambda_{\text{max}}^{H}\Vert\Delta\boldsymbol{W}\Vert_{2}^{2}$.

\paragraph{Derivation of Equation~\eqref{eq:gap_Muon} (Muon Loss Bound).}\label{app:muon_bound}
For the Muon optimizer, we set $\Delta\boldsymbol{W} = -\eta\boldsymbol{O}_{t}$ where $\boldsymbol{O}_{t} = U V^{\top}$ is the polar factor of the momentum buffer. Substituting into the quadratic bound~\eqref{eq:Taylor_expansion}:
\begin{align*}
\mathcal{L}\left(\boldsymbol{W}_{t-1}+\Delta\boldsymbol{W}\right)
&\leq \mathcal{L}\left(\boldsymbol{W}_{t-1}\right) - \eta\,\nabla\mathcal{L}\left(\boldsymbol{W}_{t-1}\right)^{\top}\boldsymbol{O}_{t}
   + \frac{\lambda_{\text{max}}^{H}}{2}\eta^{2}\Vert\boldsymbol{O}_{t}\Vert_{F}^{2} \\
&= \mathcal{L}\left(\boldsymbol{W}_{t-1}\right) - \eta\,\text{tr}\left(G_{t}^{\top}\boldsymbol{O}_{t}\right)
   + \frac{\lambda_{\text{max}}^{H}}{2}\eta^{2}\,\text{tr}\left(\boldsymbol{O}_{t}^{\top}\boldsymbol{O}_{t}\right) \\
&= \mathcal{L}\left(\boldsymbol{W}_{t-1}\right) - \eta\,\text{tr}\left(G_{t}^{\top}\boldsymbol{O}_{t}\right)
   + \frac{\lambda_{\text{max}}^{H}}{2}\eta^{2}\,\text{tr}\left(V^{\top}U^{\top}UV\right) \\
&= \mathcal{L}\left(\boldsymbol{W}_{t-1}\right) - \eta\,\text{tr}\left(G_{t}^{\top}\boldsymbol{O}_{t}\right)
   + \frac{\lambda_{\text{max}}^{H}}{2}\eta^{2}\,m,
\end{align*}
where the second line uses $\nabla\mathcal{L}(\boldsymbol{W}_{t-1})^{\top}\boldsymbol{O}_{t} = \text{tr}(G_{t}^{\top}\boldsymbol{O}_{t})$ and $\Vert\boldsymbol{O}_{t}\Vert_{F}^{2} = \text{tr}(\boldsymbol{O}_{t}^{\top}\boldsymbol{O}_{t})$, the third line substitutes the SVD $\boldsymbol{O}_{t} = UV^{\top}$, and the fourth uses $U^{\top}U = I$ (since $U$ has orthonormal columns) and $\text{tr}(V^{\top}V) = m$ (since $V$ is $m \times m$ orthogonal, assuming $m \leq n$).

\subsection{Proof of Theorem \ref{thm:gd_lr}}\label{sec:proof_gd_lr}
\begin{proof}
From~\eqref{eq:gd_eta_max}, the maximal learning rate for gradient descent is
\[
\eta_{t}^{\max} = \frac{2}{\lambda_{\text{max}}^{H}}\frac{\nabla\mathcal{L}\left(\boldsymbol{W}_{t-1}\right)^{\top}\boldsymbol{G}_{t}}{\Vert\boldsymbol{G}_{t}\Vert_{2}^{2}}.
\]
Since $\boldsymbol{G}_{t} = \nabla\mathcal{L}(\boldsymbol{W}_{t-1})$ by definition, the numerator simplifies to $\nabla\mathcal{L}(\boldsymbol{W}_{t-1})^{\top}\boldsymbol{G}_{t} = \Vert\boldsymbol{G}_{t}\Vert_{2}^{2}$, giving
\[
\eta_{t}^{\max} = \frac{2}{\lambda_{\text{max}}^{H}}\frac{\Vert\boldsymbol{G}_{t}\Vert_{2}^{2}}{\Vert\boldsymbol{G}_{t}\Vert_{2}^{2}} = \frac{2}{\lambda_{\text{max}}^{H}}.
\]
\end{proof}

\subsection{Proof of Theorem~\ref{thm:eta_Muon}}\label{sec:proof_eta_muon}
\begin{proof}
From the derivation in Appendix~\ref{app:muon_bound}, the loss after a Muon update $\Delta\boldsymbol{W} = -\eta\boldsymbol{O}_t$ satisfies
\[
\mathcal{L}(\boldsymbol{W}_{t}) \leq \mathcal{L}(\boldsymbol{W}_{t-1}) - \eta\,\mathrm{tr}(G_t^{\top}\boldsymbol{O}_t) + \frac{\lambda_{\text{max}}^{H}}{2}\eta^{2}m.
\]
For the loss to strictly decrease, we require the right-hand side to be smaller than $\mathcal{L}(\boldsymbol{W}_{t-1})$, which gives
\[
\eta < \frac{2}{\lambda_{\text{max}}^{H}}\frac{\mathrm{tr}(G_t^{\top}\boldsymbol{O}_t)}{m},
\]
and hence the maximal stable learning rate
\[
\eta_t^{\max} = \frac{2}{\lambda_{\text{max}}^{H}}\frac{\mathrm{tr}(G_t^{\top}\boldsymbol{O}_t)}{m}.
\]

Now, for the Muon optimizer without momentum ($\mu=0$), we have $\boldsymbol{O}_t = \text{Newton-Schulz}(G_t)$. Assuming exact Newton--Schulz iterations, let $G_t = U\Sigma V^{\top}$ be the SVD of the gradient matrix, where $U \in \mathbb{R}^{m \times m}$ and $V \in \mathbb{R}^{n \times m}$ have orthonormal columns (assuming $m \leq n$) and $\Sigma = \mathrm{diag}(\sigma_1, \ldots, \sigma_m)$ contains the singular values. The Newton--Schulz iteration converges to $\boldsymbol{O}_t = (G_t G_t^{\top})^{-1/2}G_t = UV^{\top}$.

Substituting the SVD:
\begin{align*}
\mathrm{tr}(G_t^{\top}\boldsymbol{O}_t) &= \mathrm{tr}\big((V\Sigma U^{\top})(UV^{\top})\big) \\
&= \mathrm{tr}(V\Sigma V^{\top}) \qquad (\text{since } U^{\top}U = I_m) \\
&= \mathrm{tr}(\Sigma V^{\top}V) \qquad (\text{by cyclic property of trace}) \\
&= \mathrm{tr}(\Sigma) \qquad (\text{since } V^{\top}V = I_m) \\
&= \sum_{i=1}^{m}\sigma_i.
\end{align*}

Plugging this into the expression for $\eta_t^{\max}$ yields
\[
\eta_t^{\max} = \frac{2}{\lambda_{\text{max}}^{H}}\frac{\sum_{i=1}^{m}\sigma_i}{m},
\]
which completes the proof.
\end{proof}

\subsection{Explanation of the Vectorial Update for Muon}\label{sec:vector_form}
We have the update in the matrix form
\begin{align*}
\boldsymbol{W}_{t} & =\boldsymbol{W}_{t-1}-\eta\left(G_{t}G_{t}^{\top}\right)^{-1/2}G_{t}\\
 & =\boldsymbol{W}_{t-1}-\eta\left(G_{t}G_{t}^{\top}\right)^{-1/2}G_{t}\mathbb{I}_{n}.
\end{align*}

Using the formula $\vecop(ABC)=\left(C^{\top}\otimes A\right)\vecop\left(B\right)$, we gain
\begin{align*}
\boldsymbol{W}_{t} & =\boldsymbol{W}_{t-1}-\eta\left(\mathbb{I}_{n}\otimes\left(G_{t}G_{t}^{\top}\right)^{-1/2}\right)\vecop\left(G_{t}\right)\\
 & =\boldsymbol{W}_{t-1}-\eta\boldsymbol{P}_{t}\boldsymbol{g}_{t}.
\end{align*}

\subsection{Proof of Lemma \ref{lem:smooth}}\label{sec:proof_smooth}
\begin{proof} We now prove both (i) and (ii).

(i) We start with
\begin{align*}
\mathcal{L}\left(\boldsymbol{W}'\right) & =\mathcal{L}\left(\boldsymbol{W}\right)+\left\langle \nabla\mathcal{L}\left(\boldsymbol{W}\right),\boldsymbol{W}'-\boldsymbol{W}\right\rangle +\frac{1}{2}\left(\boldsymbol{W}'-\boldsymbol{W}\right)^{\top}H\left(\boldsymbol{W}'-\boldsymbol{W}\right)+o\left(\Vert\boldsymbol{W}'-\boldsymbol{W}\Vert_{2}^{2}\right)\\
 & \approx\mathcal{L}\left(\boldsymbol{W}\right)+\left\langle \nabla\mathcal{L}\left(\boldsymbol{W}\right),\boldsymbol{W}'-\boldsymbol{W}\right\rangle +\frac{1}{2}\left(\boldsymbol{W}'-\boldsymbol{W}\right)^{\top}H\left(\boldsymbol{W}'-\boldsymbol{W}\right).
\end{align*}
We now prove that $\left(\boldsymbol{W}'-\boldsymbol{W}\right)^{\top}H\left(\boldsymbol{W}'-\boldsymbol{W}\right)\le \tilde{\beta}\Vert\boldsymbol{W}'-\boldsymbol{W}\Vert_{\boldsymbol{P}^{-1}}^{2}$. Let $\boldsymbol{v}= \boldsymbol{W}'-\boldsymbol{W}$ and $\boldsymbol{u} = \boldsymbol{P}^{-1/2}\boldsymbol{v}$ or $\boldsymbol{v} = \boldsymbol{P}^{1/2}\boldsymbol{u}$. We then have
\[
\boldsymbol{v}^{\top}H\boldsymbol{v}=\left(\boldsymbol{P}^{1/2}\boldsymbol{u}\right)^{\top}H\boldsymbol{P}^{1/2}\boldsymbol{u}=\boldsymbol{u}^{\top}\left(\boldsymbol{P}^{1/2}H\boldsymbol{P}^{1/2}\right)\boldsymbol{u}=\boldsymbol{u}^{\top}\boldsymbol{Q}\boldsymbol{u},
\]
where $\boldsymbol{Q}=\boldsymbol{P}^{1/2}H\boldsymbol{P}^{1/2}$.

Moreover, we have
\begin{align*}
\boldsymbol{u}^{\top}\boldsymbol{Q}\boldsymbol{u} & \leq\lambda_{max}\left(\boldsymbol{P}^{1/2}H\boldsymbol{P}^{1/2}\right)\Vert\boldsymbol{u}\Vert_{2}^{2},\\
\left(\boldsymbol{P}^{-1/2}\boldsymbol{v}\right)^{\top} & \left(\boldsymbol{P}^{1/2}H\boldsymbol{P}^{1/2}\right)\boldsymbol{P}^{-1/2}\boldsymbol{v}\leq\lambda_{max}\left(\boldsymbol{P}^{1/2}H\boldsymbol{P}^{1/2}\right)\Vert\boldsymbol{u}\Vert_{2}^{2},\\
\boldsymbol{v}^{\top}H\boldsymbol{v} & \leq\lambda_{max}\left(\boldsymbol{P}^{1/2}H\boldsymbol{P}^{1/2}\right)\Vert\boldsymbol{u}\Vert_{2}^{2}.
\end{align*}
It appears that
\[
\Vert\boldsymbol{u}\Vert_{2}^{2}=\boldsymbol{u}^{\top}\boldsymbol{u}=\left(\boldsymbol{P}^{-1/2}\boldsymbol{v}\right)^{\top}\boldsymbol{P}^{-1/2}\boldsymbol{v}=\boldsymbol{v}^{\top}\boldsymbol{P}^{-1}\boldsymbol{v}=\Vert\boldsymbol{v}\Vert_{\boldsymbol{P}^{-1}}.
\]
Therefore, we have
\[
\boldsymbol{v}^{\top}H\boldsymbol{v}\leq\lambda_{max}\left(\boldsymbol{P}^{1/2}H\boldsymbol{P}^{1/2}\right)\Vert\boldsymbol{v}\Vert_{\boldsymbol{P}^{-1}}^{2}=\tilde{\beta}\Vert\boldsymbol{v}\Vert_{\boldsymbol{P}^{-1}}^{2}.
\]

This concludes our proof with the note that because $\boldsymbol{P^{1/2}H\boldsymbol{P}^{1/2}}$ and $H\boldsymbol{P}$ are similar, their eigen-values are the same.

(ii) We start with
\begin{align*}
\mathcal{L}\left(\boldsymbol{W}'\right) & =\mathcal{L}\left(\boldsymbol{W}\right)+\left\langle \nabla\mathcal{L}\left(\boldsymbol{W}\right),\boldsymbol{W}'-\boldsymbol{W}\right\rangle +\frac{1}{2}\left(\boldsymbol{W}'-\boldsymbol{W}\right)^{\top}H\left(\boldsymbol{W}'-\boldsymbol{W}\right)+o\left(\Vert\boldsymbol{W}'-\boldsymbol{W}\Vert_{2}^{2}\right)\\
 & \approx\mathcal{L}\left(\boldsymbol{W}\right)+\left\langle \nabla\mathcal{L}\left(\boldsymbol{W}\right),\boldsymbol{W}'-\boldsymbol{W}\right\rangle +\frac{1}{2}\left(\boldsymbol{W}'-\boldsymbol{W}\right)^{\top}H\left(\boldsymbol{W}'-\boldsymbol{W}\right).
\end{align*}
We now prove that $\left(\boldsymbol{W}'-\boldsymbol{W}\right)^{\top}H\left(\boldsymbol{W}'-\boldsymbol{W}\right)\ge \tilde{\alpha}\Vert\boldsymbol{W}'-\boldsymbol{W}\Vert_{\boldsymbol{P}^{-1}}^{2}$. Let $\boldsymbol{v}= \boldsymbol{W}'-\boldsymbol{W}$ and $\boldsymbol{u} = \boldsymbol{P}^{-1/2}\boldsymbol{v}$ or $\boldsymbol{v} = \boldsymbol{P}^{1/2}\boldsymbol{u}$. We then have
\[
\boldsymbol{v}^{\top}H\boldsymbol{v}=\left(\boldsymbol{P}^{1/2}\boldsymbol{u}\right)^{\top}H\boldsymbol{P}^{1/2}\boldsymbol{u}=\boldsymbol{u}^{\top}\left(\boldsymbol{P}^{1/2}H\boldsymbol{P}^{1/2}\right)\boldsymbol{u}=\boldsymbol{u}^{\top}\boldsymbol{Q}\boldsymbol{u},
\]
where $\boldsymbol{Q}=\boldsymbol{P}^{1/2}H\boldsymbol{P}^{1/2}$.

Moreover, we have
\begin{align*}
\boldsymbol{u}^{\top}\boldsymbol{Q}\boldsymbol{u} & \geq\lambda_{min}\left(\boldsymbol{P}^{1/2}H\boldsymbol{P}^{1/2}\right)\Vert\boldsymbol{u}\Vert_{2}^{2},\\
\left(\boldsymbol{P}^{-1/2}\boldsymbol{v}\right)^{\top} & \left(\boldsymbol{P}^{1/2}H\boldsymbol{P}^{1/2}\right)\boldsymbol{P}^{-1/2}\boldsymbol{v}\geq\lambda_{min}\left(\boldsymbol{P}^{1/2}H\boldsymbol{P}^{1/2}\right)\Vert\boldsymbol{u}\Vert_{2}^{2},\\
\boldsymbol{v}^{\top}H\boldsymbol{v} & \geq\lambda_{min}\left(\boldsymbol{P}^{1/2}H\boldsymbol{P}^{1/2}\right)\Vert\boldsymbol{u}\Vert_{2}^{2} = \lambda_{min}\left(H\boldsymbol{P}\right)\Vert\boldsymbol{u}\Vert_{2}^{2}.
\end{align*}
Here we note that because $\boldsymbol{P^{1/2}H\boldsymbol{P}^{1/2}}$ and $H\boldsymbol{P}$ are similar, their eigen-values are the same.

It appears that
\[
\Vert\boldsymbol{u}\Vert_{2}^{2}=\boldsymbol{u}^{\top}\boldsymbol{u}=\left(\boldsymbol{P}^{-1/2}\boldsymbol{v}\right)^{\top}\boldsymbol{P}^{-1/2}\boldsymbol{v}=\boldsymbol{v}^{\top}\boldsymbol{P}^{-1}\boldsymbol{v}=\Vert\boldsymbol{v}\Vert_{\boldsymbol{P}^{-1}}.
\]
Therefore, we have
\[
\boldsymbol{v}^{\top}H\boldsymbol{v}\geq\lambda_{min}\left(\boldsymbol{P}^{1/2}H\boldsymbol{P}^{1/2}\right)\Vert\boldsymbol{v}\Vert_{\boldsymbol{P}^{-1}}^{2}=\tilde{\alpha}\Vert\boldsymbol{v}\Vert_{\boldsymbol{P}^{-1}}^{2}.
\] 
\[
\mathcal{L}\left(\boldsymbol{W}'\right)\geq\mathcal{L}\left(\boldsymbol{W}\right)+\left\langle \nabla\mathcal{L}\left(\boldsymbol{W}\right),\boldsymbol{W}'-\boldsymbol{W}\right\rangle +\frac{1}{2}\tilde{\alpha}\Vert\boldsymbol{W}'-\boldsymbol{W}\Vert_{\boldsymbol{P}^{-1}}^{2}.
\]
By choosing $\boldsymbol{W}' = \boldsymbol{W}^*$, we obtain
\[
\mathcal{L}\left(\boldsymbol{W}^{*}\right)\geq\mathcal{L}\left(\boldsymbol{W}\right)+\left\langle \nabla\mathcal{L}\left(\boldsymbol{W}\right),\boldsymbol{W}^{*}-\boldsymbol{W}\right\rangle +\frac{1}{2}\tilde{\alpha}\Vert\boldsymbol{W}^{*}-\boldsymbol{W}\Vert_{\boldsymbol{P}^{-1}}^{2}.
\]
Rearranging the terms, we reach
\begin{align*}
\mathcal{L}\left(\boldsymbol{W}\right)-\mathcal{L}^{*} & \leq\left\langle \nabla\mathcal{L}\left(\boldsymbol{W}\right),\boldsymbol{W}^{*}-\boldsymbol{W}\right\rangle -\frac{1}{2}\tilde{\alpha}\Vert\boldsymbol{W}^{*}-\boldsymbol{W}\Vert_{\boldsymbol{P}^{-1}}^{2}\\
 & \leq\Vert\nabla\mathcal{L}\left(\boldsymbol{W}\right)\Vert_{\boldsymbol{P}}\Vert\boldsymbol{W}^{*}-\boldsymbol{W}\Vert_{\boldsymbol{P}^{-1}}-\frac{1}{2}\tilde{\alpha}\Vert\boldsymbol{W}^{*}-\boldsymbol{W}\Vert_{\boldsymbol{P}^{-1}}^{2}.
\end{align*}
Here we note that $\Vert \cdot \Vert_{\boldsymbol{P}^{-1}}$ is a dual-norm of $\Vert \cdot \Vert_{\boldsymbol{P}}$. 

Denote $r= \Vert\boldsymbol{W}^{*}-\boldsymbol{W}\Vert_{\boldsymbol{P}^{-1}}^{2}$. We consider the function $f(r)=\Vert\nabla\mathcal{L}\left(\boldsymbol{W}\right)\Vert_{\boldsymbol{P}}r-\tilde{\alpha}r^{2}$, which has the global maximum at
\[
r_{max}=\frac{\Vert\nabla\mathcal{L}\left(\boldsymbol{W}\right)\Vert_{\boldsymbol{P}}}{\tilde{\alpha}}\rightarrow f_{max}=\frac{1}{2}\frac{\Vert\nabla\mathcal{L}\left(\boldsymbol{W}\right)\Vert_{\boldsymbol{P}}^{2}}{\tilde{\alpha}}.
\]

This leads to
\[
\mathcal{L}\left(\boldsymbol{W}\right)-\mathcal{L}^{*}\leq\frac{1}{2}\frac{\Vert\nabla\mathcal{L}\left(\boldsymbol{W}\right)\Vert_{\boldsymbol{P}}^{2}}{\tilde{\alpha}}.
\]

\end{proof}

\subsection{Proof of Theorem \ref{thm:GD_converenge_rate}}\label{sec:proof_gd_conv}
\begin{proof}
    We start with
    \[
\mathcal{L}\left(\boldsymbol{W}_{t}\right)\leq\mathcal{L}\left(\boldsymbol{W}_{t-1}\right)+\left\langle \nabla\mathcal{L}\left(\boldsymbol{W}_{t-1}\right),\boldsymbol{W}_{t}-\boldsymbol{W}_{t-1}\right\rangle +\frac{1}{2}\beta\Vert\boldsymbol{W}_{t}-\boldsymbol{W}_{t-1}\Vert_{2}^{2}.
\]
This follows that
\begin{align*}
\mathcal{L}\left(\boldsymbol{W}_{t}\right) & \leq\mathcal{L}\left(\boldsymbol{W}_{t-1}\right)+\left\langle \nabla\mathcal{L}\left(\boldsymbol{W}_{t-1}\right),-\eta\boldsymbol{g}_{t}\right\rangle +\frac{1}{2}\eta^{2}\beta\Vert\boldsymbol{g}_{t}\Vert_{2}^{2}\\
 & =\mathcal{L}\left(\boldsymbol{W}_{t-1}\right)+\left\langle \boldsymbol{g}_{t},-\eta\boldsymbol{g}_{t}\right\rangle +\frac{1}{2}\eta^{2}\beta\Vert\boldsymbol{g}_{t}\Vert_{2}^{2}\\
 & =\mathcal{L}\left(\boldsymbol{W}_{t-1}\right)-\eta\Vert\boldsymbol{g}_{t}\Vert_{2}^{2}+\frac{1}{2}\eta^{2}\beta\Vert\boldsymbol{g}_{t}\Vert_{2}^{2}\\
 & =\mathcal{L}\left(\boldsymbol{W}_{t-1}\right)-\eta\Vert\boldsymbol{g}_{t}\Vert_{2}^{2}\left(1-\frac{1}{2}\eta\beta\right).
\end{align*}

Choosing $\eta = \frac{1}{\beta}$, we have
\[
\mathcal{L}\left(\boldsymbol{W}_{t}\right)\leq\mathcal{L}\left(\boldsymbol{W}_{t-1}\right)-\frac{1}{2\beta}\Vert\boldsymbol{g}_{t}\Vert_{2}^{2}.
\]

Linking to PL-condition, we have
\begin{align*}
\mathcal{L}\left(\boldsymbol{W}_{t}\right) & \leq\mathcal{L}\left(\boldsymbol{W}_{t-1}\right)-\frac{\alpha}{\beta}\left(\mathcal{L}\left(\boldsymbol{W}_{t-1}\right)-\mathcal{L}^{*}\right)\\
\mathcal{L}\left(\boldsymbol{W}_{t}\right)-\mathcal{L}^{*} & \leq\left(1-\frac{\alpha}{\beta}\right)\left(\mathcal{L}\left(\boldsymbol{W}_{t-1}\right)-\mathcal{L}^{*}\right).
\end{align*}
\end{proof}

\subsection{Proof of Theorem \ref{thm:Muon_convergence_rate}} \label{sec:Muon_convergence}
\begin{proof}
We start with
\[
\mathcal{L}\left(\boldsymbol{W}_{t}\right)\leq\mathcal{L}\left(\boldsymbol{W}_{t-1}\right)+\left\langle \nabla\mathcal{L}\left(\boldsymbol{W}_{t-1}\right),\boldsymbol{W}_{t}-\boldsymbol{W}_{t-1}\right\rangle +\frac{1}{2}\tilde{\beta}\Vert\boldsymbol{W}_{t}-\boldsymbol{W}_{t-1}\Vert_{\boldsymbol{P}_{t}^{-1}}^{2}.
\]It appears that
\begin{align*}
\Vert\boldsymbol{W}_{t}-\boldsymbol{W}_{t-1}\Vert_{\boldsymbol{P}_{t}^{-1}}^{2} & =\eta^{2}\left(\boldsymbol{P}_{t}\boldsymbol{g}_{t}\right)^{\top}\boldsymbol{P}_{t}^{-1}\boldsymbol{P}_{t}\boldsymbol{g}_{t}\\
= & \eta^{2}\boldsymbol{g}_{t}^{\top}\boldsymbol{P}_{t}\boldsymbol{g}_{t}=\eta^{2}\Vert\boldsymbol{g}_{t}\Vert_{\boldsymbol{P}_{t}}^{2}.
\end{align*}
This follows that
\begin{align*}
\mathcal{L}\left(\boldsymbol{W}_{t}\right) & \leq\mathcal{L}\left(\boldsymbol{W}_{t-1}\right)+\left\langle \nabla\mathcal{L}\left(\boldsymbol{W}_{t-1}\right),-\eta\boldsymbol{P}_{t}\boldsymbol{g}_{t}\right\rangle +\frac{1}{2}\eta^{2}\tilde{\beta}\Vert\boldsymbol{g}_{t}\Vert_{\boldsymbol{P}_{t}}^{2}\\
 & =\mathcal{L}\left(\boldsymbol{W}_{t-1}\right)+\left\langle \boldsymbol{g}_{t},-\eta\boldsymbol{P}_{t}\boldsymbol{g}_{t}\right\rangle +\frac{1}{2}\eta^{2}\tilde{\beta}\Vert\boldsymbol{g}_{t}\Vert_{\boldsymbol{P}_{t}}^{2}\\
 & =\mathcal{L}\left(\boldsymbol{W}_{t-1}\right)-\eta\Vert\boldsymbol{g}_{t}\Vert_{\boldsymbol{P}_{t}}^{2}+\frac{1}{2}\eta^{2}\tilde{\beta}\Vert\boldsymbol{g}_{t}\Vert_{\boldsymbol{P}_{t}}^{2}\\
 & =\mathcal{L}\left(\boldsymbol{W}_{t-1}\right)-\eta\Vert\boldsymbol{g}_{t}\Vert_{\boldsymbol{P}_{t}}^{2}\left(1-\frac{1}{2}\eta\tilde{\beta}\right).
\end{align*}
By choosing $\eta = \frac{1}{\tilde{\beta}}$, this becomes
\[
\mathcal{L}\left(\boldsymbol{W}_{t}\right)\leq\mathcal{L}\left(\boldsymbol{W}_{t-1}\right)-\frac{1}{2\tilde{\beta}}\Vert\boldsymbol{g}_{t}\Vert_{\boldsymbol{P}_{t}}^{2}.
\]

Linking to the PL-condition, we obtain
\begin{align*}
\mathcal{L}\left(\boldsymbol{W}_{t}\right) & \leq\mathcal{L}\left(\boldsymbol{W}_{t-1}\right)-\frac{\tilde{\alpha}}{\tilde{\beta}}\left(\mathcal{L}\left(\boldsymbol{W}_{t-1}\right)-\mathcal{L}^{*}\right)\\
\mathcal{L}\left(\boldsymbol{W}_{t}\right)-\mathcal{L}^{*} & \leq\left(1-\frac{\tilde{\alpha}}{\tilde{\beta}}\right)\left(\mathcal{L}\left(\boldsymbol{W}_{t-1}\right)-\mathcal{L}^{*}\right).
\end{align*}

\end{proof}

\subsection{Proof of Theorem \ref{thm:comparision}}\label{sec:proof_comparison}
\begin{proof}
We derive as
\begin{align*}
\beta & =\lambda_{\text{max}}\left(H\right)\approx\lambda_{\text{max}}\left(\boldsymbol{X}^{\top}\boldsymbol{X}\otimes G_{t}G_{t}^{\top}\right)=\lambda_{\text{max}}\left(\boldsymbol{X}^{\top}\boldsymbol{X}\right)\times\lambda_{\text{max}}\left(G_{t}G_{t}^{\top}\right),\\
\alpha & =\lambda_{\text{min}}\left(H\right)\approx\lambda_{\text{min}}\left(\boldsymbol{X}^{\top}\boldsymbol{X}\otimes G_{t}G_{t}^{\top}\right)=\lambda_{\text{min}}\left(\boldsymbol{X}^{\top}\boldsymbol{X}\right)\times\lambda_{\text{min}}\left(G_{t}G_{t}^{\top}\right).
\end{align*}
\begin{align*}
\tilde{\beta} & =\lambda_{\text{max}}\left(\boldsymbol{P}_{t}H\right)=\lambda_{\text{max}}\left(\left(\mathbb{I}_{n}\otimes\left(G_{t}G_{t}^{\top}\right)^{-1/2}\right)\left(\boldsymbol{X}^{\top}\boldsymbol{X}\otimes G_{t}G_{t}^{\top}\right)\right)\\
 & =\lambda_{\text{max}}\left(\left(\mathbb{I}_{n}\boldsymbol{X}^{\top}\boldsymbol{X}\right)\otimes\left(\left(G_{t}G_{t}^{\top}\right)^{-1/2}G_{t}G_{t}^{\top}\right)\right)\\
 & =\lambda_{\text{max}}\left(\boldsymbol{X}^{\top}\boldsymbol{X}\otimes\left(G_{t}G_{t}^{\top}\right)^{1/2}\right)=\lambda_{\text{max}}\left(\boldsymbol{X}^{\top}\boldsymbol{X}\right)\lambda_{\text{max}}\left(\left(G_{t}G_{t}^{\top}\right)^{1/2}\right)\\
 & =\lambda_{\text{max}}\left(\boldsymbol{X}^{\top}\boldsymbol{X}\right)\lambda_{\text{max}}\left(G_{t}G_{t}^{\top}\right)^{1/2}.
\end{align*}
 \begin{align*}
\tilde{\alpha} & =\lambda_{\text{min}}\left(\boldsymbol{P}_{t}H\right)=\lambda_{\text{min}}\left(\left(\mathbb{I}_{n}\otimes\left(G_{t}G_{t}^{\top}\right)^{-1/2}\right)\left(\boldsymbol{X}^{\top}\boldsymbol{X}\otimes G_{t}G_{t}^{\top}\right)\right)\\
 & =\lambda_{\text{min}}\left(\left(\mathbb{I}_{n}\boldsymbol{X}^{\top}\boldsymbol{X}\right)\otimes\left(\left(G_{t}G_{t}^{\top}\right)^{-1/2}G_{t}G_{t}^{\top}\right)\right)\\
 & =\lambda_{\text{min}}\left(\boldsymbol{X}^{\top}\boldsymbol{X}\otimes\left(G_{t}G_{t}^{\top}\right)^{1/2}\right)=\lambda_{\text{min}}\left(\boldsymbol{X}^{\top}\boldsymbol{X}\right)\lambda_{\text{min}}\left(\left(G_{t}G_{t}^{\top}\right)^{1/2}\right)\\
 & =\lambda_{\text{min}}\left(\boldsymbol{X}^{\top}\boldsymbol{X}\right)\lambda_{\text{min}}\left(G_{t}G_{t}^{\top}\right)^{1/2}.
\end{align*}

\[
\frac{\alpha}{\beta}=\frac{\tilde{\alpha}}{\tilde{\beta}}\sqrt{\frac{\lambda_{\text{min}}\left(G_{t}G_{t}^{\top}\right)}{\lambda_{\text{max}}\left(G_{t}G_{t}^{\top}\right)}}<\frac{\tilde{\alpha}}{\tilde{\beta}}.
\]

This completes the proof.
\end{proof}

\section{Mechanistic Analysis: Parameter and Gradient Norms}
\label{app:mechanistic_norms}

\begin{figure*}[htpb]
    \centering
    \begin{subfigure}{0.24\textwidth}
        \includegraphics[width=\linewidth]{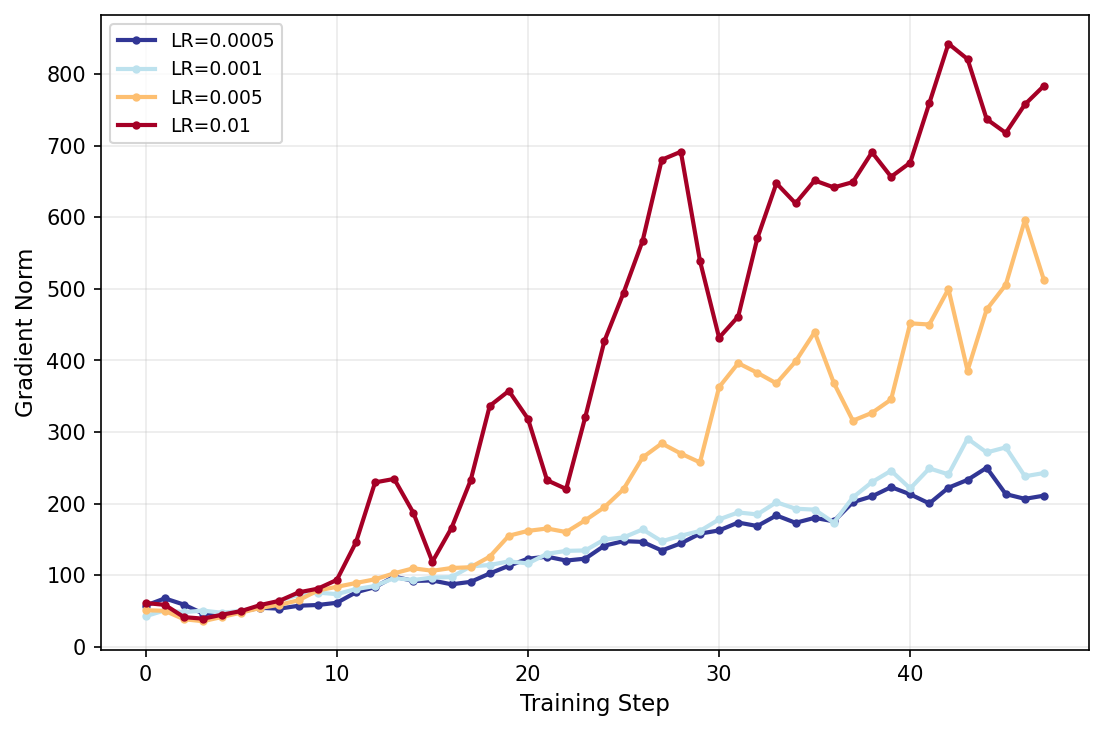}
        \caption{Muon (Grad)}
    \end{subfigure}\hfill
    \begin{subfigure}{0.24\textwidth}
        \includegraphics[width=\linewidth]{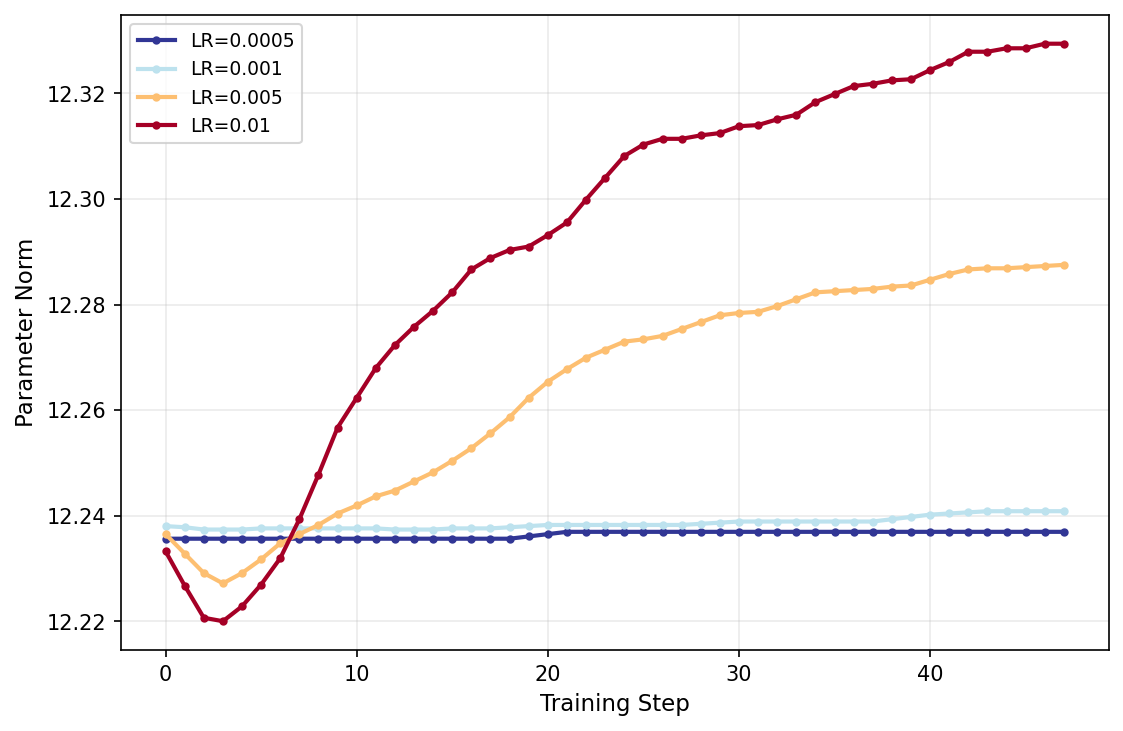}
        \caption{Muon (Param)}
    \end{subfigure}\hfill
    \begin{subfigure}{0.24\textwidth}
        \includegraphics[width=\linewidth]{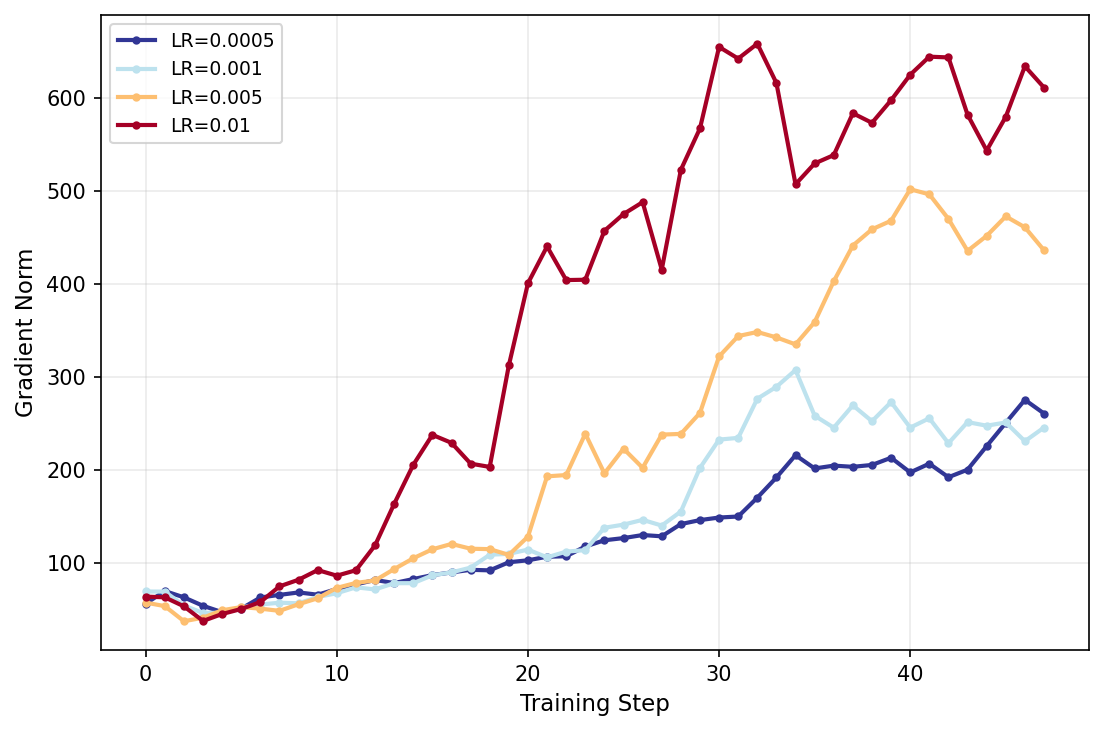}
        \caption{Muon + Mom (Grad)}
    \end{subfigure}\hfill
    \begin{subfigure}{0.24\textwidth}
        \includegraphics[width=\linewidth]{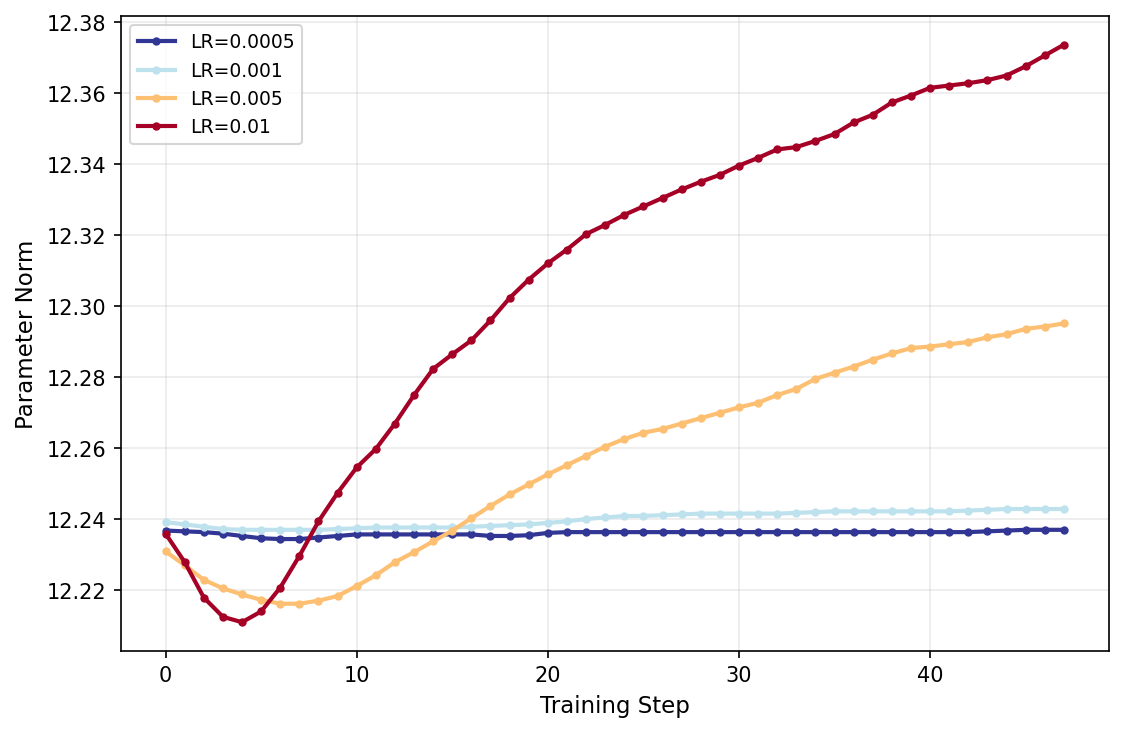}
        \caption{Muon + Mom (Param)}
    \end{subfigure}
    
    \vspace{0.4cm}
    \begin{subfigure}{0.24\textwidth}
        \includegraphics[width=\linewidth]{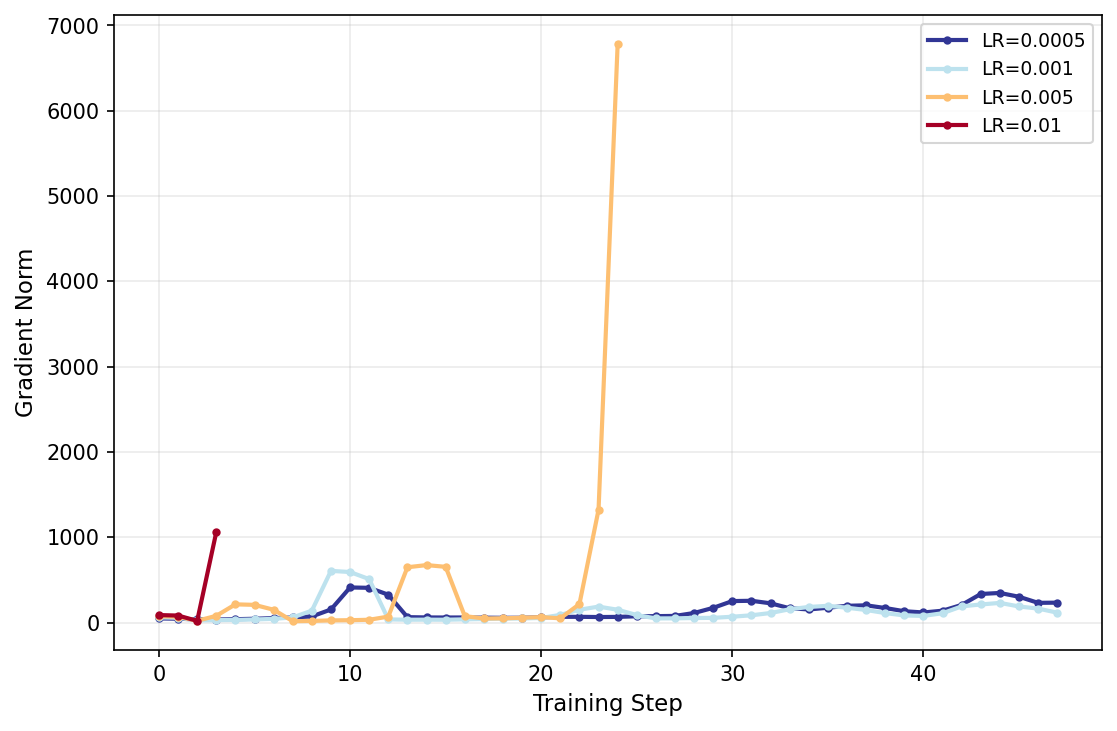}
        \caption{SGD (Grad)}
    \end{subfigure}\hfill
    \begin{subfigure}{0.24\textwidth}
        \includegraphics[width=\linewidth]{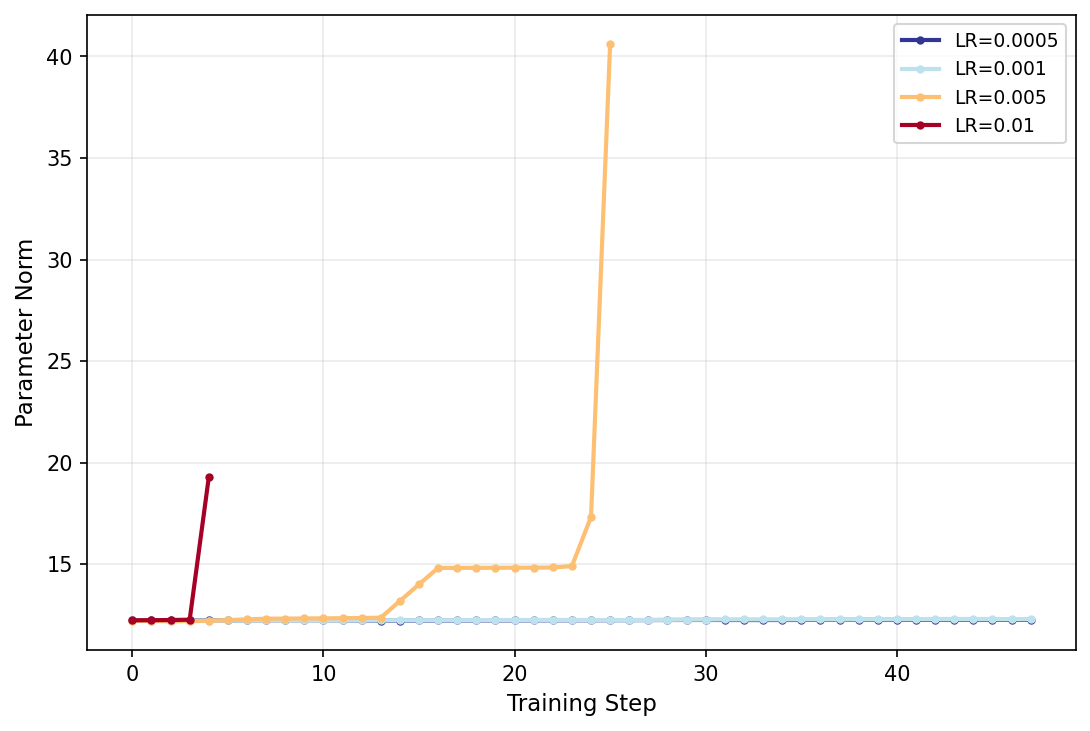}
        \caption{SGD (Param)}
    \end{subfigure}\hfill
    \begin{subfigure}{0.24\textwidth}
        \includegraphics[width=\linewidth]{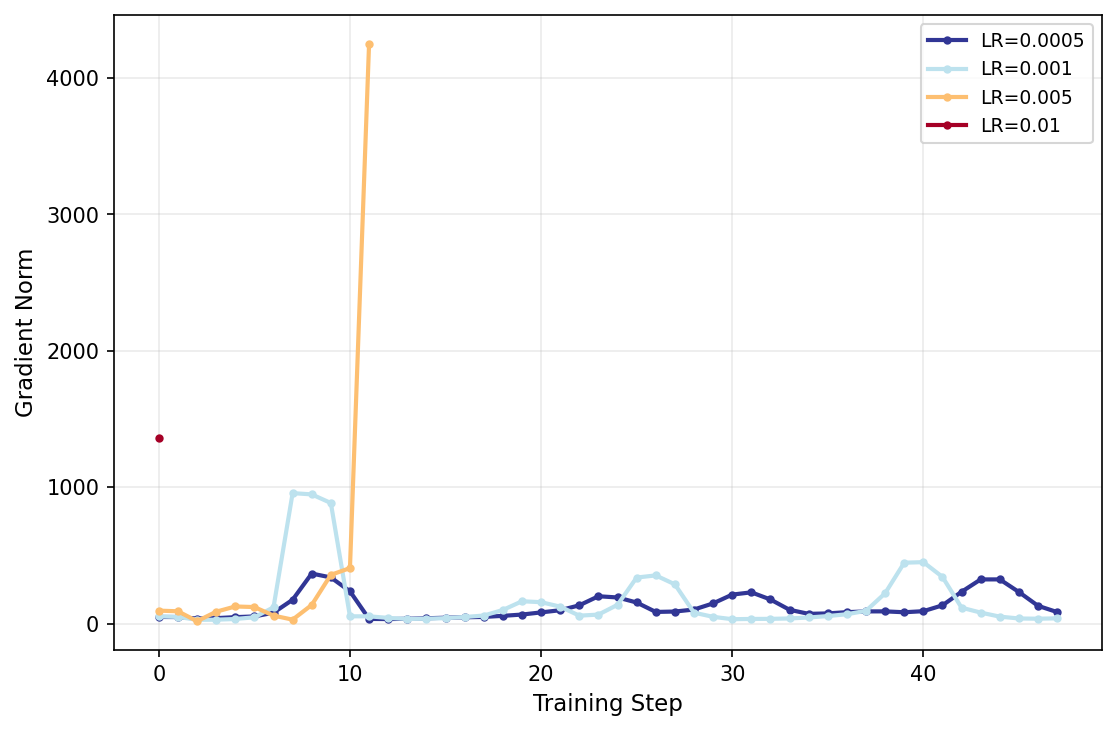}
        \caption{SGD + Mom (Grad)}
    \end{subfigure}\hfill
    \begin{subfigure}{0.24\textwidth}
        \includegraphics[width=\linewidth]{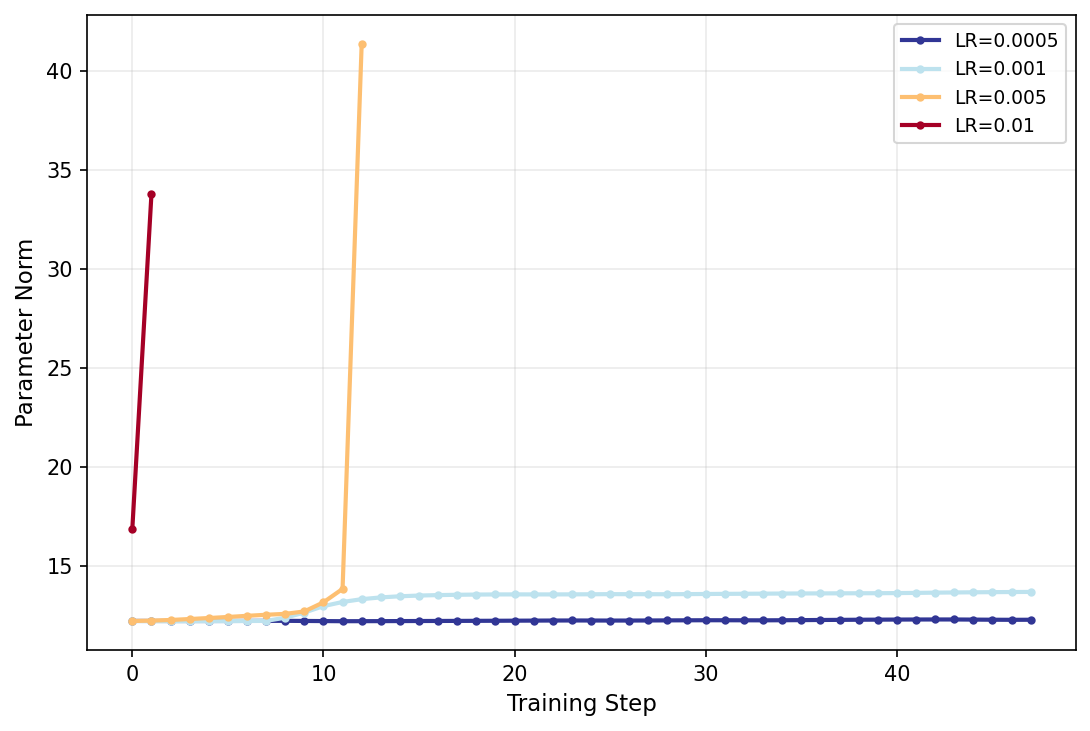}
        \caption{SGD + Mom (Param)}
    \end{subfigure}
    
    \caption{Early training dynamics (first 50 steps) of Gradient and Parameter norms. The top row demonstrates Muon's controlled, steady norm growth via spectral flattening. The bottom row reveals SGD's catastrophic parameter explosion under high learning rates, leading to immediate divergence.}
    \label{fig:exp1_norms}
\end{figure*}

To better understand the source of SGD's instability and to probe the effect predicted by Theorem~\ref{thm:eta_Muon}, we track both gradient norm and parameter norm during the first 50 training steps, as shown in Figure~\ref{fig:exp1_norms}. At a high learning rate, SGD shows a rapid increase in both quantities, indicating that the optimization trajectory quickly leaves a stable regime. This behavior is consistent with the presence of a dominant singular direction in the update, which can accumulate aggressively when the gradient is applied directly. Muon behaves quite differently. By orthogonalizing the momentum buffer through Newton--Schulz iterations, it reduces the influence of any single dominating direction in the update. As shown in the top row of Figure~\ref{fig:exp1_norms}, Muon's parameter norm grows much more gradually, and its gradient norm remains well controlled even at learning rates that cause SGD to diverge. Taken together, these results suggest that Muon's spectral flattening effect makes large learning rates substantially more tolerable by reducing the impact of extreme singular directions in the update.

\section{Validation Accuracy Threshold Analysis}
\label{sec:threshold_analysis}

\begin{figure*}[htpb]
    \centering
    \begin{subfigure}{0.48\textwidth}
        \includegraphics[width=\linewidth]{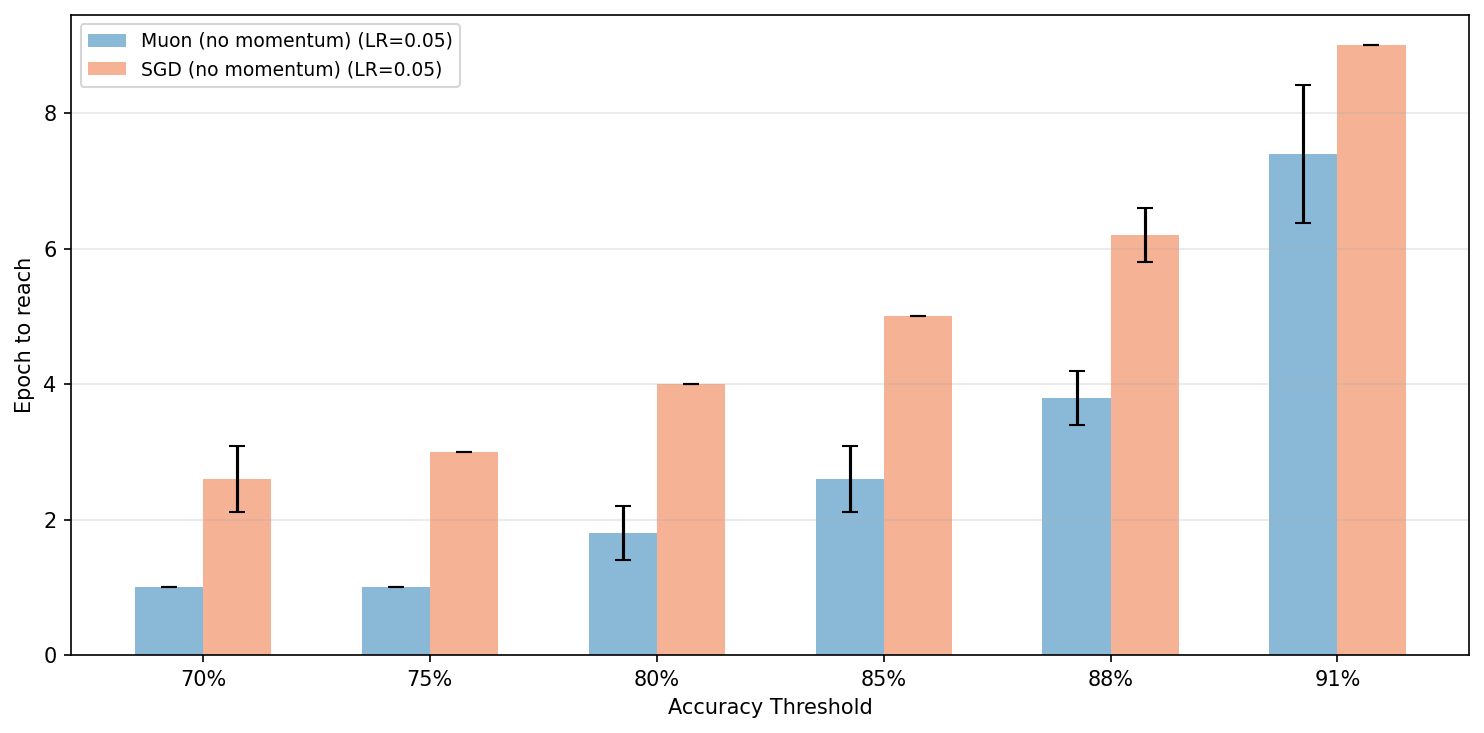}
        \caption{No Momentum}
    \end{subfigure}\hfill
    \begin{subfigure}{0.48\textwidth}
        \includegraphics[width=\linewidth]{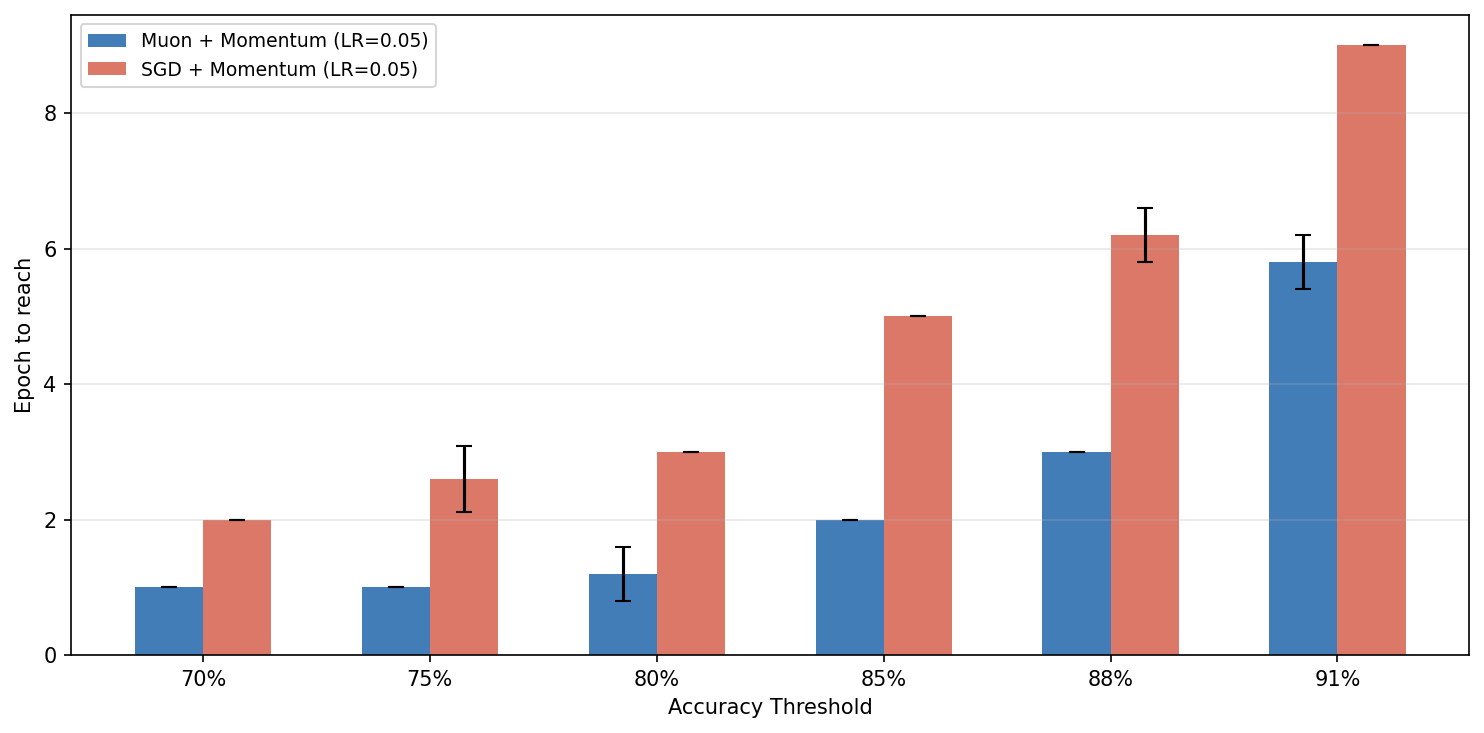}
        \caption{With Momentum}
    \end{subfigure}
    \caption{Epochs required to reach specific validation accuracy thresholds. Lower bars indicate faster convergence. Muon reaches all critical milestones significantly earlier than SGD.}
    \label{fig:exp2_threshold}
\end{figure*}

To complement the accuracy and loss curves presented in the main text, we provide a per-threshold breakdown of convergence speed. Figure~\ref{fig:exp2_threshold} reports the number of epochs each optimizer requires to reach specific validation accuracy milestones ranging from $70\%$ to $91\%$. Across all thresholds and both momentum settings, Muon consistently reaches every milestone several epochs ahead of SGD. The gap widens at higher accuracy targets, indicating that Muon's advantage is not confined to early training but persists---and even grows---as optimization approaches more challenging regions of the loss landscape. This pattern is consistent with the improved effective convergence factor predicted by Theorem~\ref{thm:comparision}: spectral preconditioning yields compounding gains over successive iterations.

\section{Normalization Principle in Transformer Architectures}
\label{app:transformer}


\begin{figure}
    \centering
    \includegraphics[width=\textwidth]{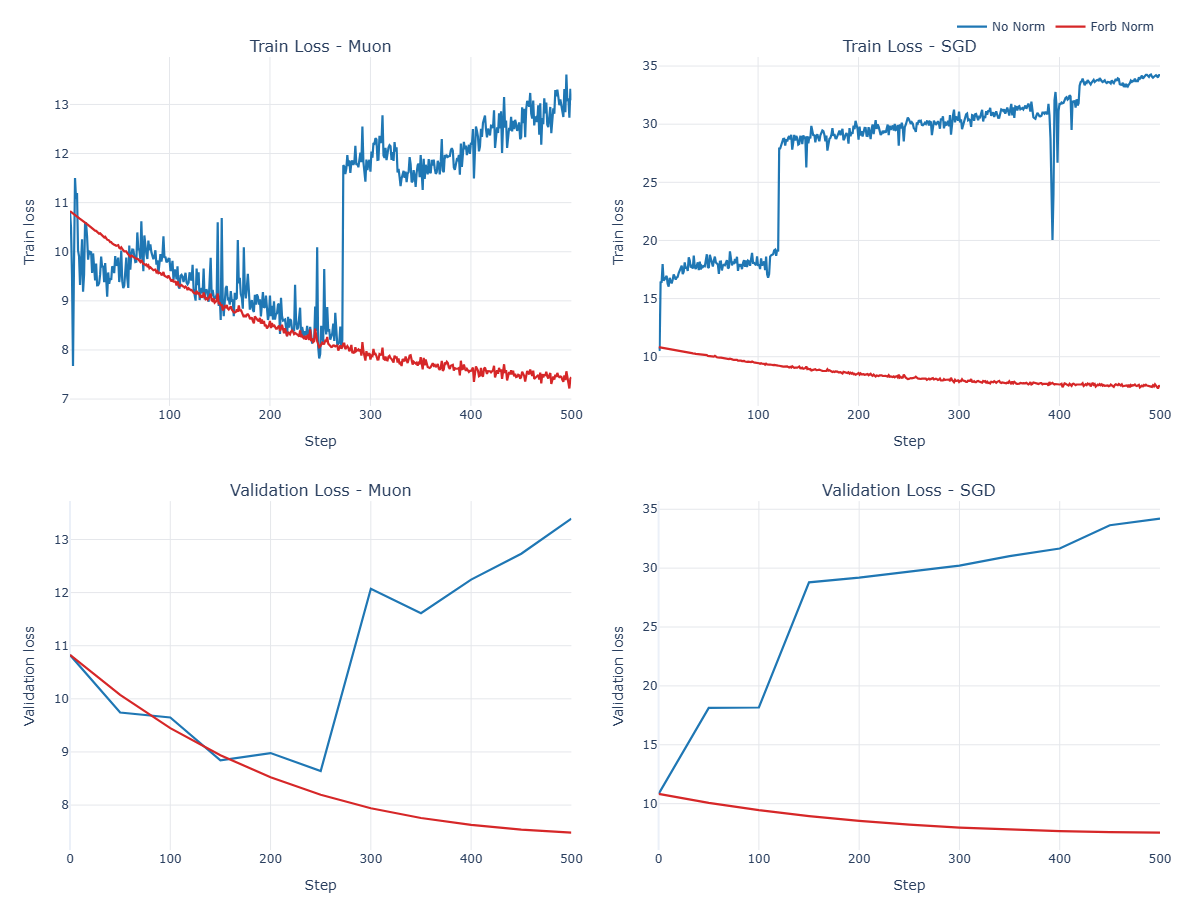}
    \caption{Training behavior of a GPT-2-style Transformer trained with FrobNorm at a high learning rate.}
    \label{fig:norm_transformer}
\end{figure}

To further test whether the proposed normalization principle generalizes beyond convolutional networks, we apply FrobNorm, introduced in
Section~\ref{sec:normalization}, to a GPT-2-style Transformer architecture. In particular, we replace the normalization layers placed before and after the $12$-layer Transformer stack with FrobNorm. We compare FrobNorm against a no-normalization baseline in order to isolate the effect of spectral normalization on high-learning-rate stability.

We pretrain the model from scratch on $50$ million tokens. To stress-test the
stability of the normalization method, we use an unusually large learning rate
of $0.2$, which is typically unstable for this architecture. Figure~\ref{fig:norm_transformer} shows that the same principle also transfers to Transformer architectures. At the large learning rate of $0.2$, the model without normalization is unstable for both Muon and SGD. With Muon, the
no-normalization baseline initially decreases its training and validation loss, but becomes unstable after roughly $270$ steps, after which both losses increase sharply. In contrast, the model with FrobNorm continues to reduce both training and validation loss throughout training.

The difference is even more pronounced for SGD. Without normalization, the
training loss rapidly increases and the validation loss diverges, indicating that this learning rate is far outside the stable regime. By contrast, FrobNorm keeps training stable and steadily decreases both training and validation loss.

These results suggest that controlling the dominant spectral scale of layer
inputs is a general normalization principle that can improve high-learning-rate stability across multiple architecture families.

\section{Convergence Rate with Best Learning Rates}
\label{app:convergence_best_lr}

In the main text (Section~\ref{sec:exp_convergence}), we compare Muon and SGD under the same learning rate to isolate the intrinsic efficiency of the optimizers. Here we complement that analysis by assigning each optimizer its best-performing learning rate as established in Section~\ref{sec:exp_lr}: $\eta_{\mathrm{conv}} = 0.1$ for Muon and $\eta_{\mathrm{conv}} = 0.01$ for SGD. All other settings (BN, linear scheduler, 5 runs) remain identical. This setup reflects realistic practice, where practitioners tune the learning rate to each optimizer's strength, and allows us to evaluate whether Muon's advantage persists even when SGD is given its own optimal learning rate.

\begin{figure*}[htpb]
    \centering
    \begin{subfigure}{0.48\textwidth}
        \includegraphics[width=\linewidth]{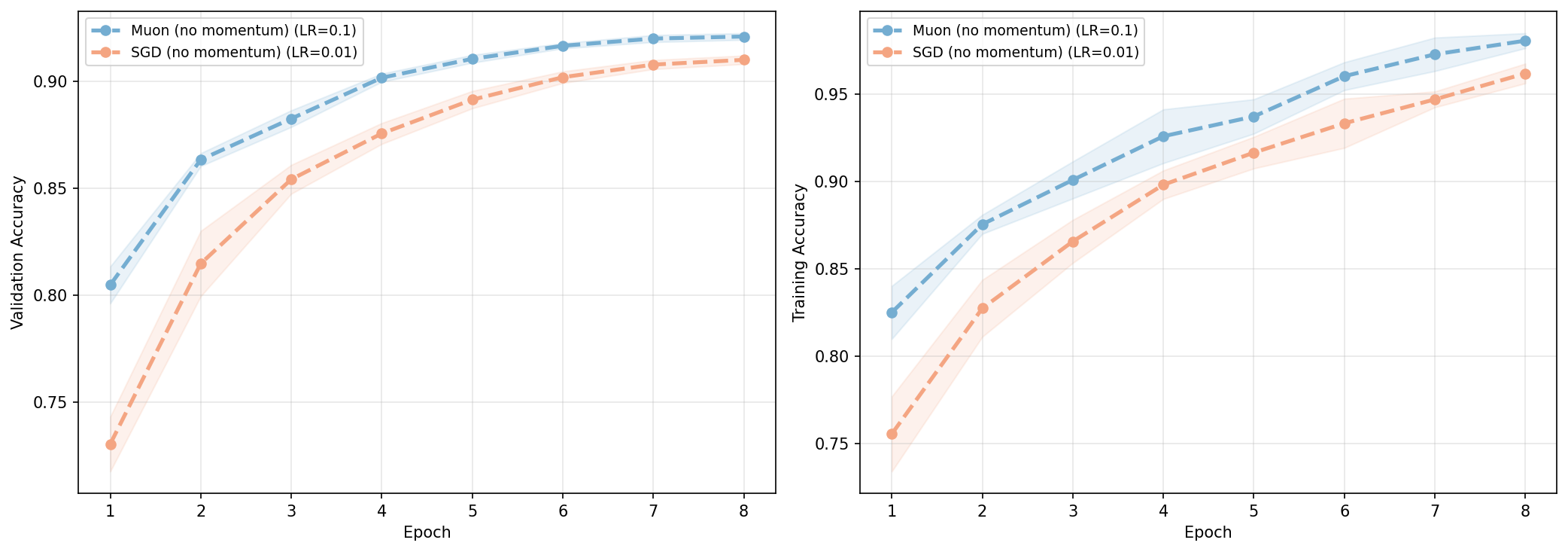}
        \caption{No Momentum}
    \end{subfigure}\hfill
    \begin{subfigure}{0.48\textwidth}
        \includegraphics[width=\linewidth]{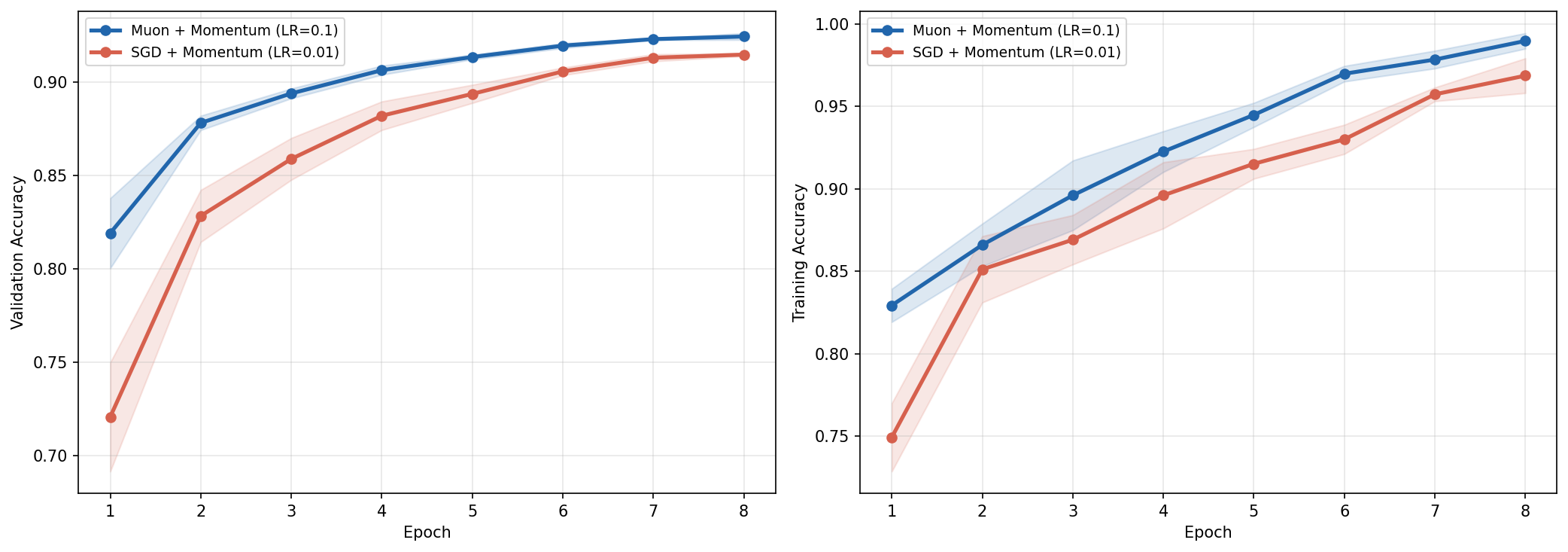}
        \caption{With Momentum}
    \end{subfigure}
    \caption{Training and Validation Accuracy vs. Epoch with best learning rates (Muon at $0.1$, SGD at $0.01$).}
    \label{fig:exp2_best_lr_acc}
\end{figure*}

\begin{figure*}[htpb]
    \centering
    \begin{subfigure}{0.48\textwidth}
        \includegraphics[width=\linewidth]{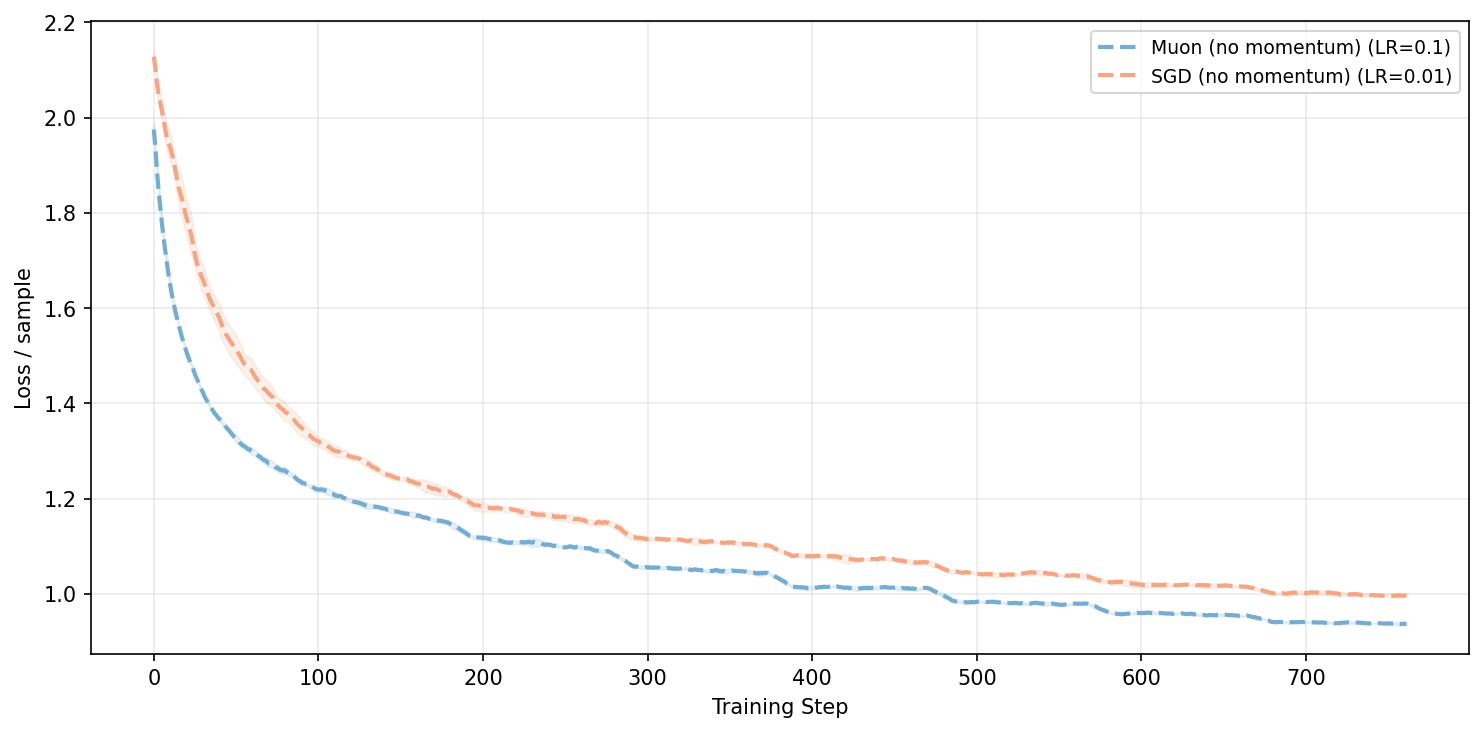}
        \caption{No Momentum}
    \end{subfigure}\hfill
    \begin{subfigure}{0.48\textwidth}
        \includegraphics[width=\linewidth]{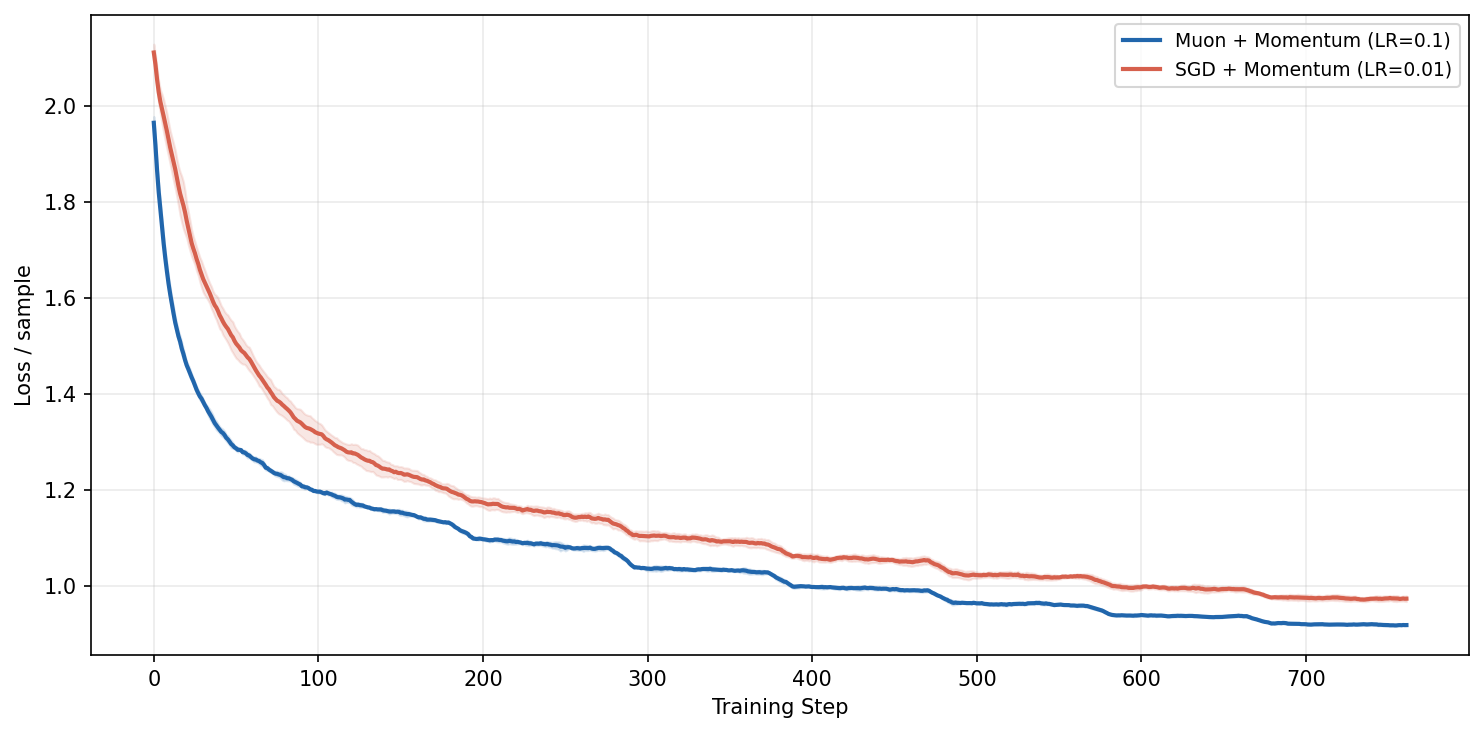}
        \caption{With Momentum}
    \end{subfigure}
    \caption{Training Loss vs. Step with best learning rates (Muon at $0.1$, SGD at $0.01$).}
    \label{fig:exp2_best_lr_loss}
\end{figure*}

\begin{figure*}[htpb]
    \centering
    \begin{subfigure}{0.48\textwidth}
        \includegraphics[width=\linewidth]{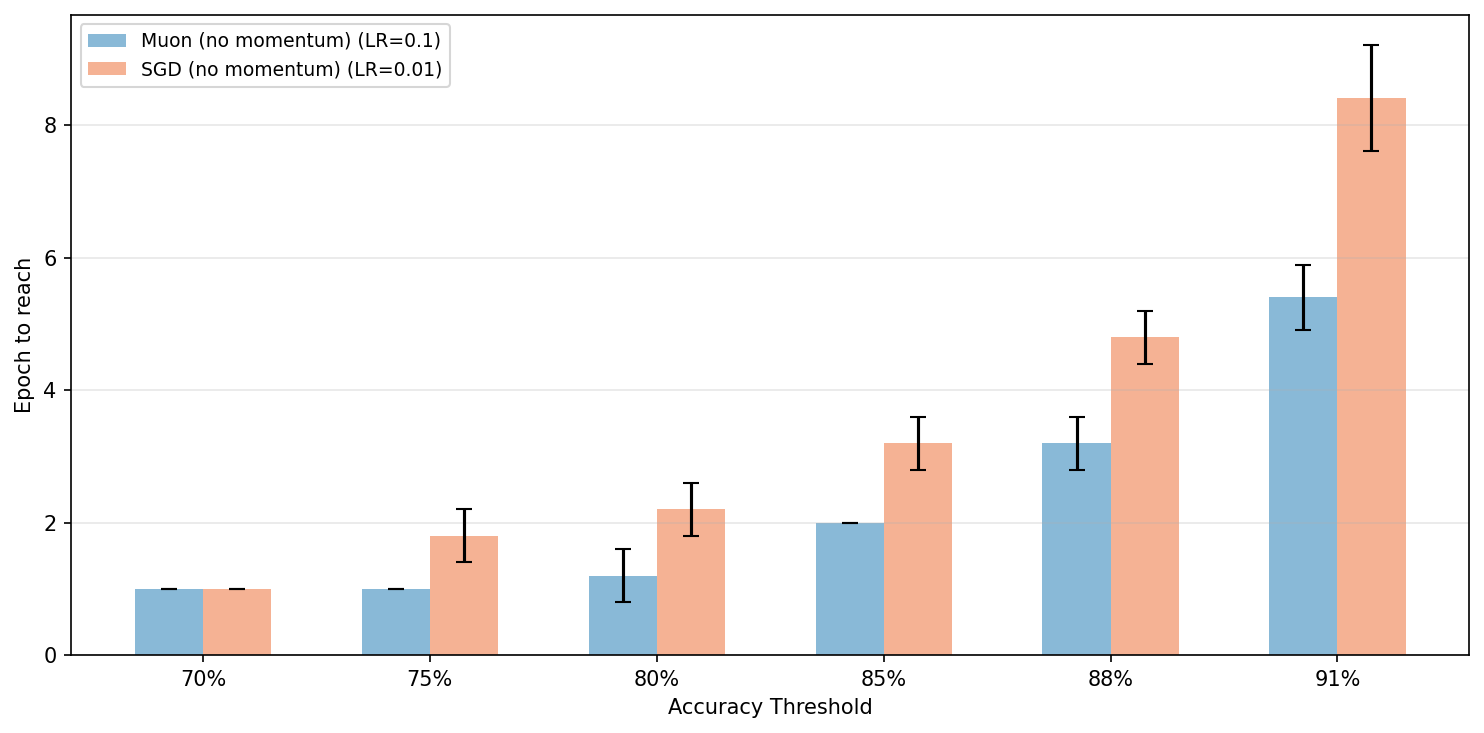}
        \caption{No Momentum}
    \end{subfigure}\hfill
    \begin{subfigure}{0.48\textwidth}
        \includegraphics[width=\linewidth]{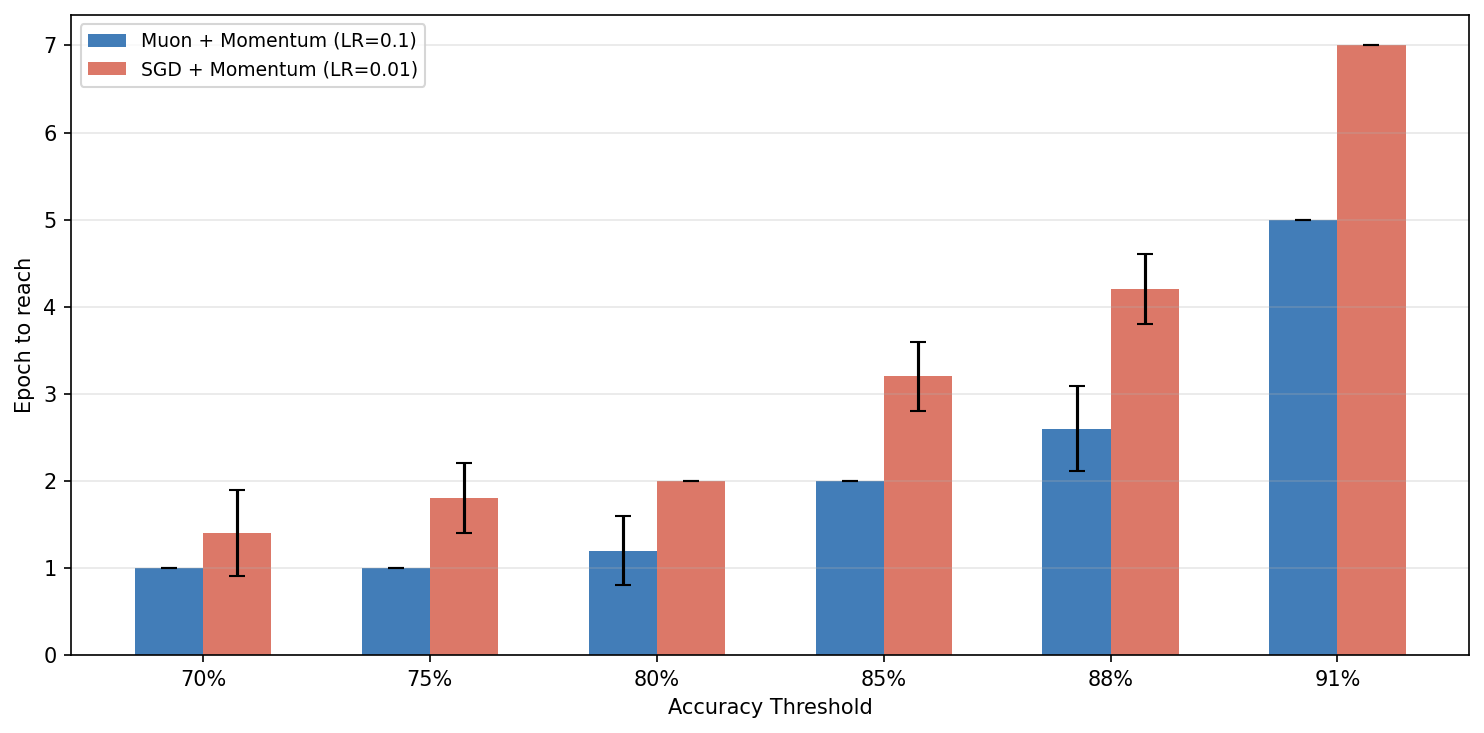}
        \caption{With Momentum}
    \end{subfigure}
    \caption{Epochs to reach validation accuracy thresholds with best learning rates (Muon at $0.1$, SGD at $0.01$).}
    \label{fig:exp2_best_lr_threshold}
\end{figure*}

\begin{figure*}[htpb]
    \centering
    \begin{subfigure}{0.48\textwidth}
        \includegraphics[width=\linewidth]{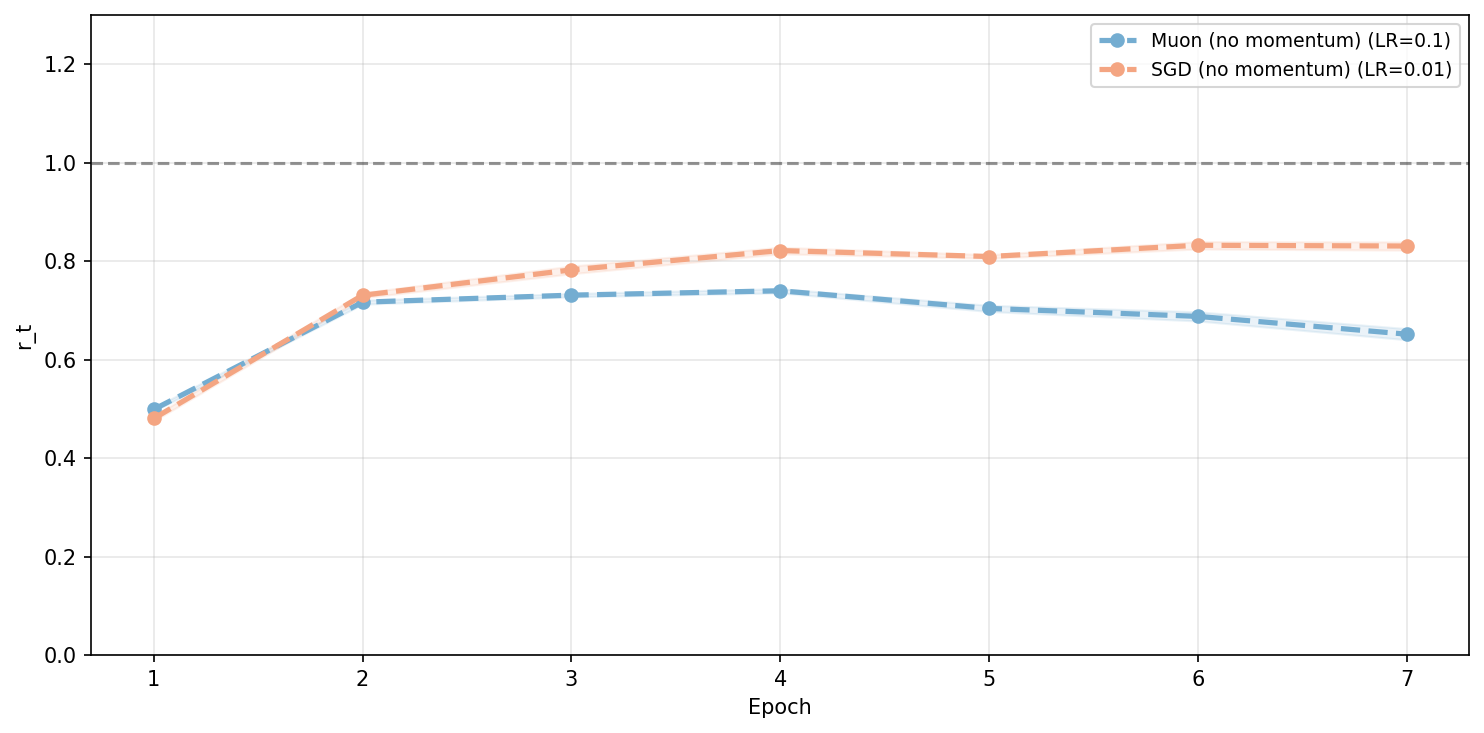}
        \caption{No Momentum}
    \end{subfigure}\hfill
    \begin{subfigure}{0.48\textwidth}
        \includegraphics[width=\linewidth]{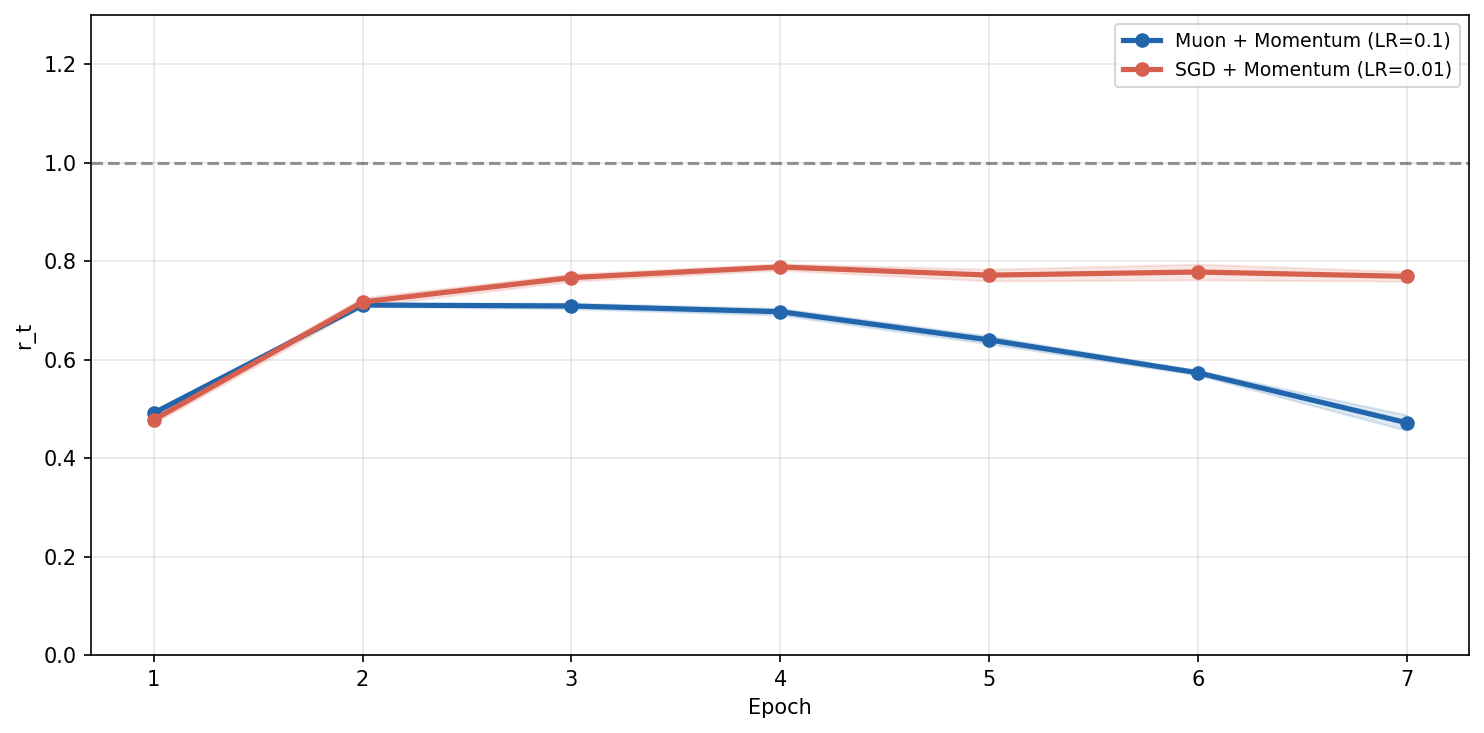}
        \caption{With Momentum}
    \end{subfigure}
    \caption{Empirical Convergence Ratio ($r_t$) with best learning rates (Muon at $0.1$, SGD at $0.01$). Muon maintains a consistently lower $r_t$ throughout training.}
    \label{fig:exp2_best_lr_rt}
\end{figure*}

The results confirm that Muon's convergence advantage is not merely an artifact of using the same step size for both optimizers. Even when SGD is given its own best learning rate of $0.01$, Muon at $0.1$ reaches higher accuracy sooner (Figure~\ref{fig:exp2_best_lr_acc}), achieves steeper loss descent (Figure~\ref{fig:exp2_best_lr_loss}), arrives at all validation milestones earlier (Figure~\ref{fig:exp2_best_lr_threshold}), and maintains a consistently lower convergence ratio $r_t$ (Figure~\ref{fig:exp2_best_lr_rt}). This demonstrates that the convergence acceleration predicted by Theorem~\ref{thm:comparision} translates into a practical advantage under realistic tuning conditions.

\section{Compute Resources for Experiments}
\label{app:compute}

All CNN experiments on CIFAR-10 (Sections~\ref{sec:exp_lr}--\ref{sec:exp_convergence} and Appendix~\ref{app:convergence_best_lr}) were run on a single NVIDIA A100 40\,GB GPU. The Transformer pretraining experiment (Appendix~\ref{app:transformer}) was run on 4 NVIDIA A100 40\,GB GPUs. Every configuration was repeated with 5 random seeds $\{0, 1, 2, 3, 4\}$.

\section{Limitations and Broader Impacts}

\textbf{Limitations.}
Our analysis assumes the deterministic full-batch setting and the exact polar factor $UV^\top$, whereas practical Muon uses stochastic gradients, finite Newton--Schulz iterations, and momentum. The convergence rate comparison relies on a Kronecker-factored Hessian approximation that may be less accurate outside the convolutional layers studied here. Our experiments are confined to CIFAR-10 with CifarNet; validating the quantitative predictions on larger-scale architectures and datasets remains an open direction. The preconditioning framework applies to matrix-shaped parameters and does not address scalar parameter groups (biases, normalization scales), which still require a separate optimizer.

\textbf{Broader Impacts.}
This work is primarily theoretical and methodological, aimed at understanding the mechanism behind Muon's empirical success. Faster convergence and tolerance of larger learning rates can reduce the computational cost and energy consumption of training, though these savings could also be reinvested into training larger models. We see no unique negative societal consequences from this contribution beyond those common to general advances in optimization for deep learning.



\end{document}